\newtheorem{theorem}{Theorem}
\newtheorem{lemma}{Lemma}
\def\Mat#1{{\boldsymbol{#1}}}
\def\Vec#1{{\boldsymbol{#1}}}
\newtheorem*{definition}{Definition}
\DeclareRobustCommand\onedot{\futurelet\@let@token\@onedot}
\def\eg{\emph{e.g.}} 
\def\ie{\emph{i.e.}}
\def\etal{\emph{et al.}}
\begin{document}
	%
	\title{Toward Enhanced Robustness in Unsupervised Graph Representation Learning: A Graph Information Bottleneck Perspective}
	%
	%
	%
	%
	
	\author{Jihong~Wang, Minnan~Luo$^{*}$, Jundong~Li, Ziqi~Liu, Jun~Zhou and~Qinghua~Zheng
		\IEEEcompsocitemizethanks{\IEEEcompsocthanksitem Jihong Wang, Minnan Luo, and Qinghua Zheng are with the Ministry of Education Key Lab for Intelligent Networks and Network Security, School of Computer Science and Technology, Xi'an Jiaotong University, Xi'an 710049, China. \protect\\
			E-mail: wang1946456505@stu.xjtu.edu.cn, minnluo@xjtu.edu.cn, qhzheng@xjtu.edu.cn 
			\IEEEcompsocthanksitem Jundong Li is with the Department of Electrical and Computer Engineering, Department of Computer Science, and School of Data Science, University of Virginia, USA. \protect\\
			E-mail:  jundong@virginia.edu
			\IEEEcompsocthanksitem Ziqi Liu and Jun Zhou are with the Ant Financial Services Group, Hangzhou, Zhejiang 310000, China \protect\\
			E-mail:  ziqiliu@antgroup.com, jun.zhoujun@antgroup.com
			
			\IEEEcompsocthanksitem  Corresponding author: Minnan Luo.}
	}
	
	%
	%

	\markboth{Journal of \LaTeX\ Class Files,~Vol.~14, No.~8, August~2015}%
	{Shell \MakeLowercase{\textit{et al.}}: Bare Advanced Demo of IEEEtran.cls for IEEE Computer Society Journals}
	%



	\IEEEtitleabstractindextext{%
		\begin{abstract}
		Recent studies have revealed that GNNs are vulnerable to adversarial attacks. Most existing robust graph learning methods measure model robustness based on  label information, rendering them infeasible when label information is not available. A straightforward direction is to employ the widely used Infomax technique from typical Unsupervised Graph Representation Learning (UGRL) to learn robust unsupervised representations. Nonetheless, directly transplanting the Infomax technique from typical UGRL to robust UGRL may involve a biased assumption. In light of the limitation of Infomax, we propose a novel unbiased robust UGRL method called \emph{Robust Graph Information Bottleneck} (RGIB), which is grounded in the Information Bottleneck (IB) principle. Our RGIB attempts to learn robust node representations against adversarial perturbations by preserving the original information in the benign graph while eliminating the adversarial information in the adversarial graph. There are mainly two challenges to optimize RGIB: 1) high complexity of adversarial attack to perturb node features and graph structure jointly in the training procedure; 2) mutual information estimation upon adversarially attacked graphs. To tackle these problems, we further propose an efficient adversarial training strategy with only feature perturbations and an effective mutual information estimator with subgraph-level summary. Moreover, we theoretically establish a connection between our proposed RGIB and the robustness of downstream classifiers, revealing that RGIB can provide a lower bound on the adversarial risk of downstream classifiers. Extensive experiments over several benchmarks and downstream tasks demonstrate the effectiveness and superiority of our proposed method.
		
		\end{abstract}

		\begin{IEEEkeywords}
		Adversarial Attacks, Robustness, Unsupervised Graph Representation Learning, Mutual Information, Information Bottleneck 
	\end{IEEEkeywords}}

	\maketitle

	\IEEEdisplaynontitleabstractindextext

	%
	\IEEEpeerreviewmaketitle

	\ifCLASSOPTIONcompsoc
	\IEEEraisesectionheading{\section{Introduction}\label{sec:introduction}}
	\else
	\section{introduction}
	\label{sec:introduction}
	\fi

	%
	%
	%
	%
	\IEEEPARstart{U}{nsupervised} graph representation learning (UGRL)~\cite{velickovic2019deep,peng2020graph,8392745,8519335,8941296} recently gained significant attention from researchers, due to its potential applications in various domains, such as chemistry, social media and bioinformatics~\cite{ye2020symmetrical,fung2021benchmarking,8326519,9001178,yuan2020xgnn}, among others. UGRL aims to embed the nodes of a graph into a low-dimensional space and extract the most meaningful information for downstream tasks without relying on label information. Compared with supervised methods~\cite{kipf2016semi,hamilton2017inductive,velivckovicgraph}, UGRL achieves competitive performance while avoiding the need for expensive manual labels.
 
    Graph Neural Networks (GNNs)~\cite{kipf2016semi,hamilton2017inductive,8901123,feng2019graph} have recently emerged as a promising learning paradigm in UGRL owing to their superior expressive capabilities. Despite their success, robustness of GNNs remains a critical challenge in UGRL. Numerous recent studies~\cite{zugner2018adversarial,zugner2019adversarial,xu2019topology} have shown that GNNs are vulnerable to adversarial attacks, \ie, slight perturbations on graph structure or node features may mislead the models into making incorrect predictions. This vulnerability could lead to severe consequences in certain applications. For example, GNNs are widely used in the recommendation systems of various e-commerce platforms~\cite{fan2020graph,ying2018graph,fan2019graph}. On these platforms, malicious attackers may employ vicious accounts to interact synergistically with both popular and unpopular items. By exploiting vulnerabilities in GNN-based recommendation systems, attackers could deceive these recommendation systems into increasing the visibility of low-quality, lesser-known products. Consequently, such malicious behaviors may undermine user experience and compromise the overall effectiveness of recommendation systems.

 
 

\begin{figure}[t]
    \centering
    \includegraphics[width=0.4\textwidth]{./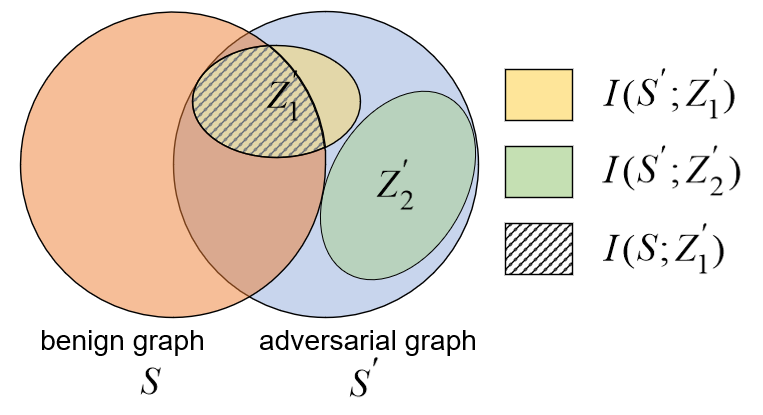}
    \caption{An intuitive view of our motivation. Graph domain is represented by a circle, and the size of the circle indicates the entropy of graph. $S$ and $S^\prime$ denotes the benign graph and adversarial graph, respectively. Two ellipses indicate two different adversarial representations $Z^\prime_1$ and $Z^\prime_2$.}
    \label{fig:motivation}
\end{figure}

	\begin{figure*}[ht]
	\setlength{\belowcaptionskip}{-0.5cm}
        \centering 
        \includegraphics[width=0.9\textwidth]{./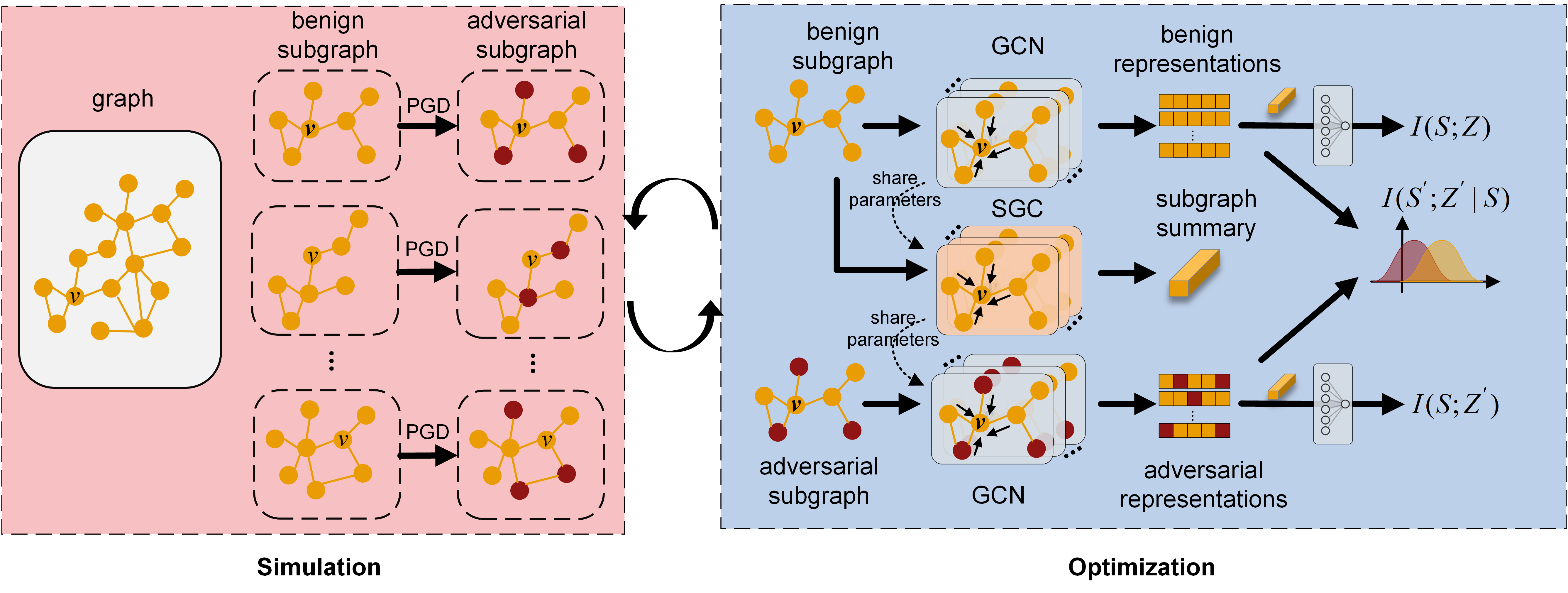} 
        \caption{An overview of our proposed RGIB method which consists of two primary steps. In the simulation step, we conduct PGD attack to generate feature perturbations efficiently. In the optimization step, we effectively optimize the mutual information to retain the original information in the benign graph while discarding the adversarial information in the adversarial graph.} 
        \label{fig:intro} 
        \end{figure*}

    To address the security and privacy concerns arising from the vulnerability of GNNs, recent studies~\cite{jin2019power,zhang2020gnnguard,xu2019topology} have proposed several defense methods against graph adversarial attacks, which fall under the umbrella of \emph{robust graph representation learning}. However, the majority of these studies necessitate access to label information, rendering them inapplicable to UGRL. In this paper, we delve into the robust UGRL problem. Specifically, given an input graph, our objective is to learn a robust encoder that maps nodes to low-dimensional representations, which remain robust even when certain edges or node features are perturbed adversarially. To address the robust UGRL problem, a straightforward direction is to employ the widely used Infomax~\cite{velickovic2019deep,peng2020graph} technique from typical UGRL to learn robust representations. For example, GRV~\cite{xu2022unsupervised}, a robust UGRL method, achieves state-of-the-art robust performance by maximizing the mutual information (MI) between the adversarial graph and its corresponding adversarial representations. Nonetheless, we argue that directly transplanting the Infomax technique from the typical UGRL to the robust UGRL problem involves an assumption, which posits that an encoder is robust if the MI remains stable before and after an attack~\cite{xu2022unsupervised}. Upon an in-depth theoretical analysis of the Infomax objective, we discover that the involved assumption is biased, since the encoder might achieve high MI even if it solely embeds the adversarial information. We provide an intuitive explanation for our theoretical analysis in \cref{fig:motivation}. In the example, we represent the graph domain with a circle and the representation domain with an ellipse. Suppose there are two graphs: a benign graph $S$ and an adversarial graph $S^\prime$. Two representations, $Z^\prime_1$ and $Z^\prime_2$, are learned based on the adversarial graph. The mutual information between the representations and graphs can be abstracted as their overlapping area (\eg, $I(S^\prime, Z_1^\prime)$). As illustrated in \cref{fig:motivation}, although $Z^\prime_2$ achieves higher mutual information with $S^\prime$ than $Z^\prime_1$, \ie, $I(S^\prime;Z_2^\prime) > I(S^\prime;Z_1^\prime)$, $Z_2^\prime$ is undesirable since it only embeds adversarial information, without incorporating any information from the benign graph. Theoretical analysis in our paper further proves this intuition. 
 
    In light of the limitations associated with directly employing Infomax technique in robust UGRL, we propose a novel and unbiased approach to robust UGRL that is grounded in the Information Bottleneck (IB) principle~\cite{tishby2000information,alemi2016deep}. The IB principle aims to learn robust representations by eliminating irrelevant input information and retaining only the information relevant to a particular task, thereby reducing the impact of extraneous factors or noise nuisance. Similarly, we propose to discard the adversarial information in the adversarial graph while preserving the original information in the benign graph. The IB principle endows our method freedom from the biased assumption involved in the Infomax technique. Specifically, given the adversarial graph $S^{\prime}$ and the representation $Z_1^\prime$ in \cref{fig:motivation}, we decompose the mutual information $I(S^\prime;Z_1^\prime)$ into two complementary terms: the informative term (the yellow shadow part in \cref{fig:motivation}) and the adversarial term (the yellow part without shadow). The informative term preserves original information associated with the benign graph $S$, whereas the adversarial term encompasses the adversarial information in $S^\prime$ that may impede the performance of representations. To learn robust representations, our method, named Robust Graph Information Bottleneck (RGIB), aims to discard adversarial information and preserve original information by concurrently maximizing the informative term and minimizing the adversarial term. Additionally, we theoretically prove that our RGIB method benefits the robustness of downstream classifications, regardless of the specific forms of the classifiers. The proof demonstrates the validity and effectiveness of our RGIB theoretically.

    The implementation of RGIB method consists of two primary steps: the simulation step and the optimization step. In the simulation step, we consider a scenario where the prior knowledge about the adversarial attacker is inaccessible. Instead, we propose to simulate the attackers' behavior by considering the worst-case scenario, where adversarial subgraphs are generated to optimize our RGIB objective in the least favorable direction. In the optimization step, we simultaneously maximize the informative term and minimize the adversarial term to learn robust representations. There are two primary challenges confronting us in the two steps, correspondingly.
    The first challenge centers around the generation of adversarial graphs. Adversarial perturbations can be introduced into both the structure and node features of a graph. Nonetheless, perturbing the structure of a graph is a combinatorial problem and can lead to prohibitively expensive computational cost~\cite{zugner2019adversarial,xu2019topology} in the simulation step. To overcome this challenge, we propose an efficient adversarial training strategy with only feature perturbations based on the intuition that feature perturbations and structure perturbations behave somewhat similarly in GNNs to some extent. Furthermore, the intuition is supported by a theoretical proof of the Simplifying Graph Convolutional Networks (SGC)~\cite{wu2019simplifying} and empirical studies on other GNNs. The second challenge involves optimizing mutual information on adversarial graphs. Although some methods~\cite{velickovic2019deep,peng2020graph} have been proposed to estimate mutual information on graphs in a contrastive fashion, they all assume that the graphs are free from adversarial perturbations. Given that the adversarial perturbations can impede the distinguishing of positive and negative samples in the MI estimation, we propose a fine-grained mutual information estimator named \emph{Subgraph Mutual Information} (SMI) upon the adversarial graph. Our SMI provides fine-grained subgraph-level summaries to estimate mutual information, which enables representations to eliminate the influence of adversarial perturbations more effectively. Experimental results demonstrate that our proposed SMI works well on adversarial graphs. In summary, our main contributions in this paper are summarized as follows:
    \begin{itemize}
    \item We propose a novel robust UGRL method named \textit{Robust Graph Information Bottleneck} (RGIB) based on the IB principle. Our method can preserve the original information in the benign graph while discarding the adversarial information induced by adversarial perturbations. We also theoretically demonstrate that our RGIB benefits downstream classification robustness regardless of the specific form of the classifier. 
    \item We propose an efficient feature-only adversarial training strategy for the optimization of RGIB as well as an effective mutual information estimator with a fine-grained subgraph-level summary upon the adversarial graph.
    \item Extensive experimental results across various downstream tasks, including classification, anomaly detection, and link prediction, demonstrate the robustness of our method in learning node representations that are resilient to adversarial attacks.
    \end{itemize}

	\section{related work}\label{sec:related}
	In this section, we briefly review related works on two fields closely relevant to our study, \ie,  unsupervised graph representation learning and robust graph representation learning.
	\subsection{Unsupervised Graph Representation Learning} 
	Unsupervised graph representation learning aims to learn low-dimensional representations for nodes in a graph. Traditional methods are usually based on random walk   \cite{perozzi2014deepwalk,tang2015line} or matrix factorization techniques \cite{qiu2018network,yang2008non}. With the rapid development of graph neural networks (GNNs), numerous unsupervised graph representation learning algorithms \cite{kipf2016variational,garcia2017learning,hamilton2017inductive,velickovic2019deep} based on GNNs are proposed. These methods try to learn representations in an autoencoder manner, \ie, the graph is embeded to low-dimensional node representations by a GNN-based encoder, and then the node representations are optimized by the reconstruction error. For example, Pan \etal \cite{pan2018adversarially} proposed to train a GNN-based auto-encoder in an adversarial style to force the representations to follow the Gaussian distribution; GraphSage \cite{hamilton2017inductive} learns an unsupervised GNN encoder through a random-walk based objective in an inductive way. However, these auto-encoder based methods overemphasize proximity information \cite{velickovic2019deep} and suffer from unstructured predictions \cite{tian2020contrastive}. 
	
	To overcome the shortcomings mentioned above, some works \cite{li2019graph,velickovic2019deep,sun2019infograph,zhu2021graph,hassani2020contrastive} try to learn representations by contrastive learning instead of directly optimizing the reconstruction error. Theoretically, these contrastive learning methods \cite{velickovic2019deep,zhu2021graph} maximize the mutual information instead of overemphasizing proximity information. For example, Zhu \etal \cite{zhu2021graph} proposed to learn node representations with adaptive augmentation that incorporates various priors for topological and semantic aspects of the graph in a contrastive manner. Velickovic \etal~\cite{velickovic2019deep} proposed to learn node representations through contrasting node and graph encodings and achieve decent performance on several benchmarks for node classification. 
	Despite their effectiveness, existing unsupervised graph representation learning methods mainly focus on learning effective node representations, while the robustness of the learned representations against adversarial attack is often overlooked.
	
	\subsection{Robust Graph Representation Learning}
	As recent studies \cite{dai2018adversariala,zugner2018adversarial,zugner2019adversarial,bojchevski2019adversarial} reveal that existing graph neural networks are vulnerable to adversarial attacks, robust graph representation learning attracts a surge of interests in recent years. Some defense methods \cite{jin2019power,jin2019power, xu2019topology,wang2019adversarial,jin2020graph,wu2019adversarial,entezari2020all} are proposed to eliminate the vicious influence of adversarial examples. According to the used defense strategy, existing defense methods can be divided into three categories, including model-based \cite{jin2019power,jin2019power}, training-based \cite{xu2019topology,wang2019adversarial,jin2020graph}, and preprocessing-based \cite{wu2019adversarial,entezari2020all} methods. 
	For model-based methods, Ming \etal\cite{jin2019power} designed an alternative operator based on graph powering to replace the classical Laplacian in GNN models. They demonstrate that the combination of this operator with vanilla GCN can help defense against evasion attacks. Zhang \etal. \cite{zhang2020gnnguard} adopted neighbor importance estimation and layer-wise graph memory components to increase the robustness against various attacks. For training-based methods, Xu \etal\cite{xu2019topology} proposed a topology attack method based on projected gradient descent, and the attack method is used to improve the robustness of GNNs by adversarial training. Wang \etal\cite{wang2019adversarial} leveraged adversarial contrastive learning to improve the robustness of GNN models. The proposed method applies conditional GAN to utilize graph-level auxiliary information. Jin \etal\cite{jin2020graph}  presented Pro-GNN, which jointly learns clean graph structure and trains robust GNN models together. For preprocessing-based methods, Jin \etal\cite{wu2019adversarial} proposed to compute the Jaccard Similarity to remove suspicious edges between suspicious nodes. Entezari \etal\cite{entezari2020all} found that graph adversarial attack tends to generate graphs with high-rank adjacency matrixs. Thus the researchers propose to reduce the effect of attacks by computing the low-rank approximation of the graphs before training GNN models.
	
	Note that all of the previous defense methods against graph adversarial attacks focus on semi-supervised learning, which may be impractical in some situations when labels are rare and expensive. In this paper, we focus on robust graph representation learning without any label information. The most similar work to ours is the work proposed in \cite{xu2022unsupervised}. The authors propose a robustness measure named \textit{graph representation vulnerability} (GRV), which does not rely on any supervision signals and achieves competitive learning performance on the adversarially perturbed graph. However, just as mentioned above, GRV employs a biased assumption and may involve adversarial information in the learned representations.


	\section{notation and preliminary}
	In this section, we first elaborate on the notions used in this paper.Then, we briefly introduce the Information Bottleneck principle which our method is built upon. 
    \subsection{Notation}
    In this paper, we use upper-case letters (\eg, $X$ and $Y$) to denote random variables, the corresponding calligraphic letters (\eg, $\mathcal{X}$ and $\mathcal{Y}$) to denote their support, and the corresponding lower-case (\eg, $x$ and $y$) to denote the realizations of the random variables. $p(X)$ denotes the probability distribution of the corresponding random variables. $p(X=x)$ indicates the probability density and is simplified as $p(x)$ for convenience. We denote $\mathcal{P}(\mathcal{X})$ the probability measures on $\mathcal{X}$. Let $(\mathcal{X}, \Delta)$ be a metric space, where $\Delta: \mathcal{X} \times \mathcal{X} \rightarrow \mathbb{R}$ is a distance metric. $\mathcal{B}_\Delta(x, \epsilon) = \{x^\prime \in \mathcal{X}:\Delta(x^\prime, x) \leq \epsilon\}$ is the ball around $x$ with radius $\epsilon$. Moreover, the bold upper-case letters are used to denote matrices (\eg, $\Mat{A}$ and $\Mat{X}$) and bold lower-case letters denote vectors (\eg, $\Vec{a}$ and $\Vec{x}$). 
	
	In the context of unsupervised graph representation learning, we define an attributed graph $\mathcal{G}=\left( \mathcal{V}, \mathcal{E}, \mathcal{X}\right)$, where $\mathcal{V}=\{v_1,v_2,\cdots, v_{|V|}\}$ represents the node set; $\mathcal{E}=\{e_1, e_2,\cdots, e_{|\mathcal{E}|}\} \subseteq \mathcal{V} \times \mathcal{V}$ refers to the edge set and $\mathcal{X} = \{ \Vec{x_1}, \Vec{x_2},\cdots, \Vec{x_{|\mathcal{V}|}}\}$ collects the feature vectors of all nodes. Since node representations are learned in a message-aggregation fashion, they can be determined by the corresponding receptive field in the graph. The receptive field of a node can be denoted as the $k-$hop subgraph $S$ around the node where $k$ corresponds to the layer number of the GNN encoder adopted. For convenience of analysis, we ignore the irrelevant information in the graph $G$ and focus on the receptive field $S$. A realization of $S$ can be formulated as $s_i = (\Mat{A}_i,\Mat{X}_i)$, where $\Mat{A}_i,\Mat{X}_i$ denote the adjacency matrix and feature matrix corresponding to a specific node $i$ and its $k-$hop neighbors. The goal of node-level unsupervised representation learning is to learn an encoder that maps each subgraph to a representation vector, formally: $e: \mathcal{S}\rightarrow \mathcal{Z}$ where $\mathcal{S}$ denotes the support of subgraphs $S$ and $\mathcal{Z}$ corresponds to the support of representations $Z$. We denote the adversarial subgraph and its corresponding representation as $S^\prime$ and $Z^\prime$. We further define $f:\mathcal{Z}\rightarrow \mathcal{Y}$ as a downstream learning task (\eg, classification) that maps a representation $\Vec{z} \in \mathcal{Z}$ to a class label, \ie, $f(z)\in \mathcal{Y}$. In this sense, $f \circ e$ denotes the composition of $f$ and $e$ such that $(f \circ e)(S) = f(e(S))$.

    \subsection{Information Bottleneck Principle}
    Given the input data $X$ and the corresponding label $Y$, Information Bottleneck~\cite{tishby2000information,alemi2016deep,federici2020} aims to discover a compressed latent representation $Z$ that is maximally informative in terms of $Y$ while discarding the irrelevant information in $X$. Formally, suppose a Markov Chain $Z-X-Y$, \ie, $Z$ is conditionally independent of $Y$ given 
    $X$, one can learn the latent representation $Z$ by optimizing the following optimization problem
    \begin{equation}
    \begin{aligned}
                \max_{Z} \mathcal{L}_{IB} &= I(Z;Y) - \beta I(X;Z) \\
                &=(1-\beta)I(Z;Y) - \beta I(Z;X|Y)
    \end{aligned}
    \end{equation}
    where $\beta$ denotes a hyper-parameter trading-off the informativeness term $I(Z;Y)$ and compression term $I(X;Z)$. In the above formulation, Mutual information (MI) measures the relevance of two random variables, and the MI between random variables $X$ and $Z$ is formulated as $I(X;Z) = \int_x\int_z p(x,z)\log \frac{p(x,z)}{p(x)p(z)}dxdz.$ 

    \section{methodology}
	In this section, we first present our novel and robust method for unsupervised graph representation learning (UGRL), called Robust Graph Information Bottleneck (RGIB) based on the information bottleneck principle. Then, to effectively and efficiently optimize the RGIB objective, we introduce a mutual information optimization and training strategy.

\subsection{Robust Graph Information Bottleneck}
    Robust UGRL aims to learn robust node representations of a graph against adversarial attacks without supervision signals. Unlike semi-supervised learning on graphs, quantifying the robustness of node representations without label information is often an intractable problem. One feasible method is to employ the mutual information as a measure to maximize the robustness of the representations, \ie, the Infomax technique~\cite{peng2020graph,velickovic2019deep}. Specifically, given a graph $S$ and its corresponding adversarial graph $S^\prime$, one can learn robust node representations $Z^\prime$ for $S^\prime$ by maximizing the mutual information $I(S^\prime;Z^\prime)$~\cite{xu2022unsupervised}. Nonetheless, as we mentioned above, directly transplanting the Infomax technique from the typical UGRL to the robust UGRL problem may involve a biased assumption, which assumes that an encoder is robust if the MI remains stable before and after an attack. The assumption is biased since the encoder might achieve high MI even if it solely embeds the adversarial information. To provide a theoretical view of the biased assumption, we formally decompose the mutual information $I(S^\prime;Z^\prime)$ to two complementary terms:
    \begin{theorem}\label{th:factor}
    Given a graph $S$ and its corresponding adversarial graph $S^\prime$. Let $Z^\prime$ be the representations on of $S^\prime$. Suppose a Markov Chain $S-S^\prime-Z^\prime$, \ie, $Z^\prime$ is conditionally independent of $S$ when $S^\prime$ is given, the mutual information between the adversarial graph and its corresponding adversarial representations can be decomposed to two complementary terms:
    \begin{equation}\label{eq:decomposition}
        I(S^\prime; Z^\prime) = I(S^\prime; Z^\prime|S) + I(S;Z^\prime)
    \end{equation}
    \begin{proof}
    According to the chain rule of mutual information, we can decompose the mutual information between $S^\prime$ and $Z^\prime$ as follows.
    \begin{equation}
    \begin{aligned} \label{eq:chain_rule}
        I(S^\prime; Z^\prime) &= I(S^\prime;Z^\prime|S) + I(S^\prime; Z^\prime; S) \\
        &= I(S^\prime; Z^\prime|S) + I(S;Z^\prime) - I(S;Z^\prime|S^\prime)
    \end{aligned}
    \end{equation}
    Since $Z^\prime$ is conditionally independent of $S$ given $S^\prime$, we have $I(S; Z^\prime|S^\prime) = 0$. Substituting the equation into \cref{eq:chain_rule}, we can complete the proof.
    \end{proof}
    \end{theorem}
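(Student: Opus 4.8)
The plan is to establish the decomposition purely through the chain rule for mutual information, exploiting the symmetry of the three-variable interaction term and then collapsing one piece via the Markov assumption. First I would treat the benign graph $S$ as an auxiliary variable on which to condition the pairwise mutual information $I(S^\prime; Z^\prime)$. Writing $I(A;B;C) := I(A;B) - I(A;B\mid C)$ for the interaction (co-)information, the chain rule rearranges to $I(A;B) = I(A;B\mid C) + I(A;B;C)$, which with $A=S^\prime$, $B=Z^\prime$, $C=S$ gives
\begin{equation}
I(S^\prime; Z^\prime) = I(S^\prime; Z^\prime \mid S) + I(S^\prime; Z^\prime; S).
\end{equation}
This isolates the term $I(S^\prime; Z^\prime \mid S)$, i.e.\ what $Z^\prime$ shares with $S^\prime$ \emph{beyond} whatever it already shares with $S$, which is exactly the quantity the paper will later call the adversarial term.

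Next I would use the fact that the interaction information $I(S^\prime; Z^\prime; S)$ is fully symmetric in its three arguments, so it may equally be expanded by conditioning on $S^\prime$ rather than on $S$:
\begin{equation}
I(S^\prime; Z^\prime; S) = I(S; Z^\prime) - I(S; Z^\prime \mid S^\prime).
\end{equation}
Substituting this back recovers the three-term identity $I(S^\prime; Z^\prime) = I(S^\prime; Z^\prime \mid S) + I(S; Z^\prime) - I(S; Z^\prime \mid S^\prime)$, matching \cref{eq:chain_rule}.

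Finally I would invoke the Markov chain $S-S^\prime-Z^\prime$: conditional independence of $Z^\prime$ and $S$ given $S^\prime$ is precisely the statement that $I(S; Z^\prime \mid S^\prime) = 0$. Dropping that vanishing term yields the claimed decomposition $I(S^\prime; Z^\prime) = I(S^\prime; Z^\prime \mid S) + I(S; Z^\prime)$.

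The only genuine subtlety, and the step most prone to error, is the handling of the interaction information in the second step. Unlike ordinary mutual information it need not be nonnegative, so the argument must lean on its \emph{symmetry} rather than on any positivity, and one must reconcile the two admissible chain-rule expansions (conditioning on $S$ versus on $S^\prime$) with consistent sign conventions. Everything else reduces to a direct application of the chain rule together with the definition of conditional independence, so no estimation or inequality machinery is required here.
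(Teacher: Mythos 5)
Your proposal is correct and follows essentially the same route as the paper's own proof: apply the chain rule to split $I(S^\prime;Z^\prime)$ into the conditional term plus the interaction information, expand the interaction information via its symmetry as $I(S;Z^\prime) - I(S;Z^\prime\mid S^\prime)$, and kill the last term with the Markov assumption. Your explicit remark that the argument relies only on the symmetry of the interaction information (not on any nonnegativity, which it lacks) is a nice clarification of a step the paper leaves implicit, but the substance of the proof is identical.
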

    In \cref{eq:decomposition}, $I(S^\prime; Z^\prime|S)$ is called the \textit{adversarial term} since it involves adversarial information not contained in the benign graph $S$. $I(S;Z^\prime)$ is called the \textit{informative term} since it preserves the original information of $S$. Directly maximizing $I(S^\prime; Z^\prime)$ may involve undesirable information since the adversarial term is also maximized, biasing the representations to embed the adversarial information. To avoid the biased assumption, we extend the IB principle to the robust UGRL problem:
    \begin{definition} \label{definition}
    Given a graph $S$ and its corresponding adversarial graph $S^\prime$. Let $Z^\prime$ be the representations on of $S^\prime$. Suppose a Markov Chain $S-S^\prime-Z^\prime$, Robust Graph Information Bottleneck (RGIB) seeks for the robust node representations by discarding the adversarial information while preserving the original information:
    \begin{equation}
        \max_{Z^\prime} \mathcal{L}_{RGIB} = I(S;Z^\prime) - \beta I(S^\prime;Z^\prime|S)
    \end{equation}
    where $\beta > 0$ is a parameter that makes a tradeoff between two complementary terms.
    \end{definition}
    
    The RGIB shares a similar ideology to the IB principle, which retains the original information in the benign graph while discarding the adversarial information in the adversarial graph. Specifically, RGIB avoids the biased assumption by maximizing the informative term and minimizing the adversarial term, simultaneously. 
    
    Moreover, existing studies show that there is usually a trade-off between accuracy and robustness \cite{zhang2019theoretically}, which indicates that solely considering the robustness of the model may compromise its effectiveness. To ensure both the robustness and effectiveness of our RGIB, we introduce an additional term, $I(S;Z)$, which serves to guarantee the effectiveness of representations on the benign graph:
	 \begin{equation}\label{eq:opt_complete}
	 \begin{aligned}
     \max_{Z^\prime, Z} \mathcal{L}^\prime_{RGIB} = \alpha I(S;Z^\prime) + (1-\alpha) I(S;Z) - \beta I(S^\prime;Z^\prime|S)
	 \end{aligned}
	 \end{equation}
	 where $\alpha > 0$ is a trade-off hyper-parameter. By incorporating $I(S;Z)$, we strike a balance between robustness and effectiveness, thus enabling our RGIB approach to generate reliable and informative representations. The representations can withstand adversarial perturbations while maintaining high performance in real-world applications.

    As illustrated in \cref{fig:intro}, the RGIB method consists of two primary steps: the simulation step and the optimization step. In the simulation step, the adversarial graph $S^\prime$ is inaccessible in practice due to the absence of prior knowledge about the attackers. We propose to simulate the behavior of attackers by considering the worst case where the adversarial graph $S^\prime$ minimizes the objective $\mathcal{L}^\prime_{RGIB} $. Specifically, we propose to maximize the infimum of the objective $\mathcal{L}^\prime_{RGIB}$:
    \begin{equation}\label{eq:opt}
	 \begin{aligned}
     &\max_{Z^\prime, Z} \hat{\mathcal{L}}_{RGIB}\\  s.t. \  \hat{\mathcal{L}}_{RGIB} = &\inf_{p(S^\prime)\in \mathcal{B}_{W_\infty}(p(S), \epsilon)} \mathcal{L}^\prime_{RGIB} 
	 \end{aligned}
	 \end{equation}
    where $\mathcal{B}_{W_\infty}(p(S), \epsilon)$ denotes the $\infty$-Wasserstein ball~\cite{champion2008wasserstein}. The $\infty$-Wasserstein ball imposes a constraint on the extent of the adversarial perturbations applied to the graph by limiting them within a budget $\epsilon$. In the optimization step, we maximize the objective $\hat{\mathcal{L}}_{RGIB}$ to learn robust representations.
    
    The challenge of learning robust representations via RGIB is two-fold: (i) High complexity to generate adversarial perturbations on graphs, \ie, the infimum over $S^\prime$ in \cref{eq:opt}.  (ii) Mutual information optimization on adversarially perturbed graphs. To address these issues, we propose an efficient feature-only adversarial training strategy and derive tractable bounds for mutual information estimation upon the adversarial graph. We elaborate on the details of the training strategy and mutual information bound in the following.

  \subsection{Perturbations Generation}
    The problem in \cref{eq:opt} is a bi-level optimization problem. In the inner minimization problem, we attempt to obtain perturbed subgraphs that minimize the objective $\mathcal{L}^\prime_{RGIB}$, which is also referred to as the graph adversarial attack problem. Unlike other domains, such as images, adversarial attack on graphs can be achieved by manipulating both graph structure and node features. Compared to node features, the structure of a graph is highly discrete, leading to a combinatorial optimization problem. To tackle this issue, existing works \cite{xu2019topology,zugner2019adversarial} typically employ the gradients of the adjacency matrix to optimize the perturbations. However, these methods can still be computationally expensive for large graphs, rendering it impractical to perturb the structure during the training procedure. Considering that the GNNs usually extract information through message-aggregation mechanisms, perturbations on structure and features may exhibit similar behavior: perturbations on structure alter the path of message propagation, while perturbations on features change the message directly. For instance, it is evident to achieve the following \cref{th1} for the Simplifying Graph Convolutional Networks (SGC)~\cite{wu2019simplifying}.
	 \begin{theorem}\label{th1}
	 Given a Simplifying Graph Convolutional Network (SGC) $g(\Mat{A};\Mat{X};\Mat{\Theta}) = \Mat{\hat{A}}^K\Mat{X}\Mat{\Theta}$, where $\Mat{\hat{A}} = \Mat{\tilde{D}}^{-\frac{1}{2}}\Mat{\tilde{A}}\Mat{\tilde{D}}^{-\frac{1}{2}}$ is the symmetric normalized adjacency matrix built upon the adjacency matrix with self-loop $\Mat{\tilde{A}} = \Mat{A} + \Mat{I}$ and its degree matrix $\Mat{\tilde{D}}$; $K$ refers to the number of layers of SGC, and $\Mat{\Theta}$ denotes the parameter matrix. 
	 For any perturbations $\Delta \Mat{A}$ on the adjacency matrix, there must be corresponding perturbations $\Delta \Mat{X}$ on features, such that
	 \begin{equation}\label{eq:sgc}
	 g(\Mat{A}+\Delta \Mat{A};\Mat{X};\Mat{\Theta}) = g(\Mat{A};\Mat{X}+\Delta \Mat{X};\Mat{\Theta}).
	 \end{equation}
	 \end{theorem}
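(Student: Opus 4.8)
The plan is to prove \cref{th1} by an explicit algebraic construction of $\Delta \Mat{X}$ from a given $\Delta \Mat{A}$. Observe that the SGC map $g(\Mat{A};\Mat{X};\Mat{\Theta}) = \Mat{\hat{A}}^K\Mat{X}\Mat{\Theta}$ depends on the structure only through the $K$-th power of the symmetric normalized adjacency matrix $\Mat{\hat{A}}$. Perturbing the structure $\Mat{A} \to \Mat{A} + \Delta \Mat{A}$ induces a perturbed normalized matrix, which I denote $\Mat{\hat{A}}'$ (built from $\Mat{\tilde{A}}' = \Mat{A} + \Delta \Mat{A} + \Mat{I}$ and its degree matrix). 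The goal is to find $\Delta \Mat{X}$ so that $(\Mat{\hat{A}}')^K \Mat{X}\Mat{\Theta} = \Mat{\hat{A}}^K (\Mat{X} + \Delta \Mat{X})\Mat{\Theta}$.

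First I would cancel the common right factor $\Mat{\Theta}$ and reduce the claim to finding $\Delta \Mat{X}$ satisfying $\Mat{\hat{A}}^K \Delta \Mat{X} = \bigl((\Mat{\hat{A}}')^K - \Mat{\hat{A}}^K\bigr)\Mat{X}$. The natural candidate is therefore
\begin{equation}
\Delta \Mat{X} = \bigl(\Mat{\hat{A}}^K\bigr)^{-1}\bigl((\Mat{\hat{A}}')^K - \Mat{\hat{A}}^K\bigr)\Mat{X},
\end{equation}
which I would verify by direct substitution back into \cref{eq:sgc}. This reduces the entire statement to a single invertibility question: whether $\Mat{\hat{A}}^K$ is invertible.

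The main obstacle is precisely establishing that $\Mat{\hat{A}}^K$ is invertible, and I would handle it through the spectrum of $\Mat{\hat{A}}$. The key fact is that adding self-loops ($\Mat{\tilde{A}} = \Mat{A} + \Mat{I}$) and symmetric normalization shifts and controls the eigenvalues so that $\Mat{\hat{A}}$ has no zero eigenvalue; since $\Mat{\hat{A}}$ is symmetric it is diagonalizable with real nonzero eigenvalues, so $\Mat{\hat{A}}^K$ is nonsingular and its inverse exists. I would invoke the standard eigenvalue bound for the self-loop-augmented normalized adjacency matrix (its eigenvalues lie in a range bounded away from zero), so that $\det(\Mat{\hat{A}}) \neq 0$ and hence $\det(\Mat{\hat{A}}^K) = \det(\Mat{\hat{A}})^K \neq 0$.

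With invertibility in hand, the construction is well-defined for every $\Delta \Mat{A}$, so the corresponding $\Delta \Mat{X}$ always exists and the equality \cref{eq:sgc} holds by construction, completing the proof. I would note that the argument is existential rather than constructive in spirit — it shows any structural perturbation can be \emph{matched} by a feature perturbation producing identical SGC output, which is exactly the intuition the paper uses to justify its feature-only adversarial training strategy. The only subtlety worth flagging is ensuring the self-loop assumption is genuinely used, since without it $\Mat{\hat{A}}$ (or $\Mat{A}$ normalized) could be singular on certain graphs; this is why the theorem is stated specifically for SGC with $\Mat{\tilde{A}} = \Mat{A} + \Mat{I}$.
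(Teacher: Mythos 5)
Your construction is, at bottom, the same one the paper uses: the paper's proof takes $\Delta \Mat{X} = \left(\Mat{\hat{A}}^{+}\right)^K\left(\Mat{\hat{A}}^\prime\right)^K \Mat{X} - \Mat{X}$, where $\Mat{\hat{A}}^{+}$ is the pseudo-inverse of $\Mat{\hat{A}}$, and verifies \cref{eq:sgc} by substitution; you take the identical expression with $\left(\Mat{\hat{A}}^K\right)^{-1}$ in place of $\left(\Mat{\hat{A}}^{+}\right)^K$ and reduce the whole statement to the invertibility of $\Mat{\hat{A}}^K$. The only real difference is that you attempt to \emph{prove} that invertibility, whereas the paper sidesteps the question by writing a pseudo-inverse.

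The gap is that the spectral fact you invoke is false. Adding self-loops does not bound the spectrum of $\Mat{\hat{A}} = \Mat{\tilde{D}}^{-\frac{1}{2}}\Mat{\tilde{A}}\Mat{\tilde{D}}^{-\frac{1}{2}}$ away from zero: the known result on the renormalization trick \cite{wu2019simplifying} says the eigenvalues lie in $(-1,1]$ and that self-loops shrink the \emph{negative} end of the spectrum away from $-1$; it says nothing about excluding $0$. A concrete counterexample is any complete graph, already $K_2$: there $\Mat{\tilde{A}} = \Mat{A} + \Mat{I}$ is the all-ones matrix, which has rank one, so $\Mat{\hat{A}}$ is singular. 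Worse, this is not a patchable technicality, because the theorem's conclusion itself fails on that instance: take $K_2$ and delete its single edge ($\Delta\Mat{A} = -\Mat{A}$). Then $\Mat{\hat{A}}$ is one half times the all-ones $2\times 2$ matrix (idempotent, so $\Mat{\hat{A}}^K = \Mat{\hat{A}}$), while $\Mat{\hat{A}}^\prime = \Mat{I}$; hence $g(\Mat{A};\Mat{X}+\Delta\Mat{X};\Mat{\Theta}) = \Mat{\hat{A}}(\Mat{X}+\Delta\Mat{X})\Mat{\Theta}$ always has two identical rows, whereas $g(\Mat{A}+\Delta\Mat{A};\Mat{X};\Mat{\Theta}) = \Mat{X}\Mat{\Theta}$ need not (take $\Mat{X} = \Mat{\Theta} = \Mat{I}$). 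So no $\Delta\Mat{X}$ exists, and no argument can close your invertibility step without adding a nonsingularity hypothesis to the theorem. In fairness, the paper's pseudo-inverse proof hides the same defect: since $\Mat{\hat{A}}$ is symmetric, $\Mat{\hat{A}}^K\left(\Mat{\hat{A}}^{+}\right)^K$ is the orthogonal projector onto the range of $\Mat{\hat{A}}^K$, so substituting the paper's $\Delta\Mat{X}$ back yields the projection of $\left(\Mat{\hat{A}}^\prime\right)^K\Mat{X}$ onto that range rather than $\left(\Mat{\hat{A}}^\prime\right)^K\Mat{X}$ itself, and equality again requires exactly the condition you isolated. Both proofs are therefore valid precisely on graphs where $\Mat{\hat{A}}$ is nonsingular (where they coincide); your write-up has the virtue of making the needed hypothesis explicit, but the spectral justification you offer for it is incorrect and cannot be repaired in general.
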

	 \begin{proof}
	 	The theorem can be easily proved by substituting the following equation into \cref{eq:sgc}:
	 	 \begin{equation}
	 	\Delta \Mat{X} = \left(\Mat{\hat{A}}^{+}\right)^K\left(\Mat{\hat{A}}^\prime\right)^K \Mat{X} - \Mat{X},
	 	\end{equation} 
	 	where $\Mat{\hat{A}}^\prime$ is the symmetric normalized adjacency matrix of the perturbed adjacency matrix $\Mat{A}^\prime = \Mat{A}+\Delta \Mat{A}$; $\Mat{\hat{A}}^{+}$ denotes the pseudo-inverse of $\Mat{\hat{A}}$. The proof is completed.
	 \end{proof}
    
    \cref{th1} suggests that given any structure perturbations, we can induce feature perturbations that exert the same influence on SGC. In other words, in the context of SGC, feature perturbations behave similarly to structure perturbation. To further validate whether this similarity prevails across general GNNs, we conduct empirical studies on the Cora~\cite{sen2008collective} dataset with various GNNs, including GCN~\cite{kipf2016semi}, Graphsage~\cite{hamilton2017inductive}, SGC~\cite{wu2019simplifying} and GAT~\cite{velivckovicgraph}. 
    Specifically, given a GNN denoted as $g(\cdot)$, we introduce random perturbations $\Delta \Mat{A}$ to the graph structure and seek feature perturbations $\Delta \Mat{X}$ that emulate the impact of the structure perturbations on the GNN. The similarity between the effects that feature perturbations and structure perturbations on the GNN can be measured by the Mean Square Error (MSE), \ie, $\text{MSE}\left(g\left(\Mat{A}+\Delta \Mat{A};\Mat{X}\right)-g\left(\Mat{A};\Mat{X}+\Delta \Mat{X}\right)\right)$. We identify feature perturbations analogous to structural perturbations by minimizing the MSE using the Stochastic Gradient Descent (SGD) optimizer~\cite{lecun2002efficient}, with the initial value set as $\Delta \Mat{X}=0$. We present the MSE before and after optimization in \cref{fig:equivalency}. For instance, in the case of GCN, the MSE is $0.30$ when there is no feature perturbations (\ie, $\Delta \Mat{X} = \bf{0}$) and it converges to $0.01$ with optimized feature perturbations. The empirical study suggests that feature perturbations can behave similarly to structure perturbations on general GNNs.

   \begin{figure}[t]
    \centering
    \includegraphics[width=0.4\textwidth]{./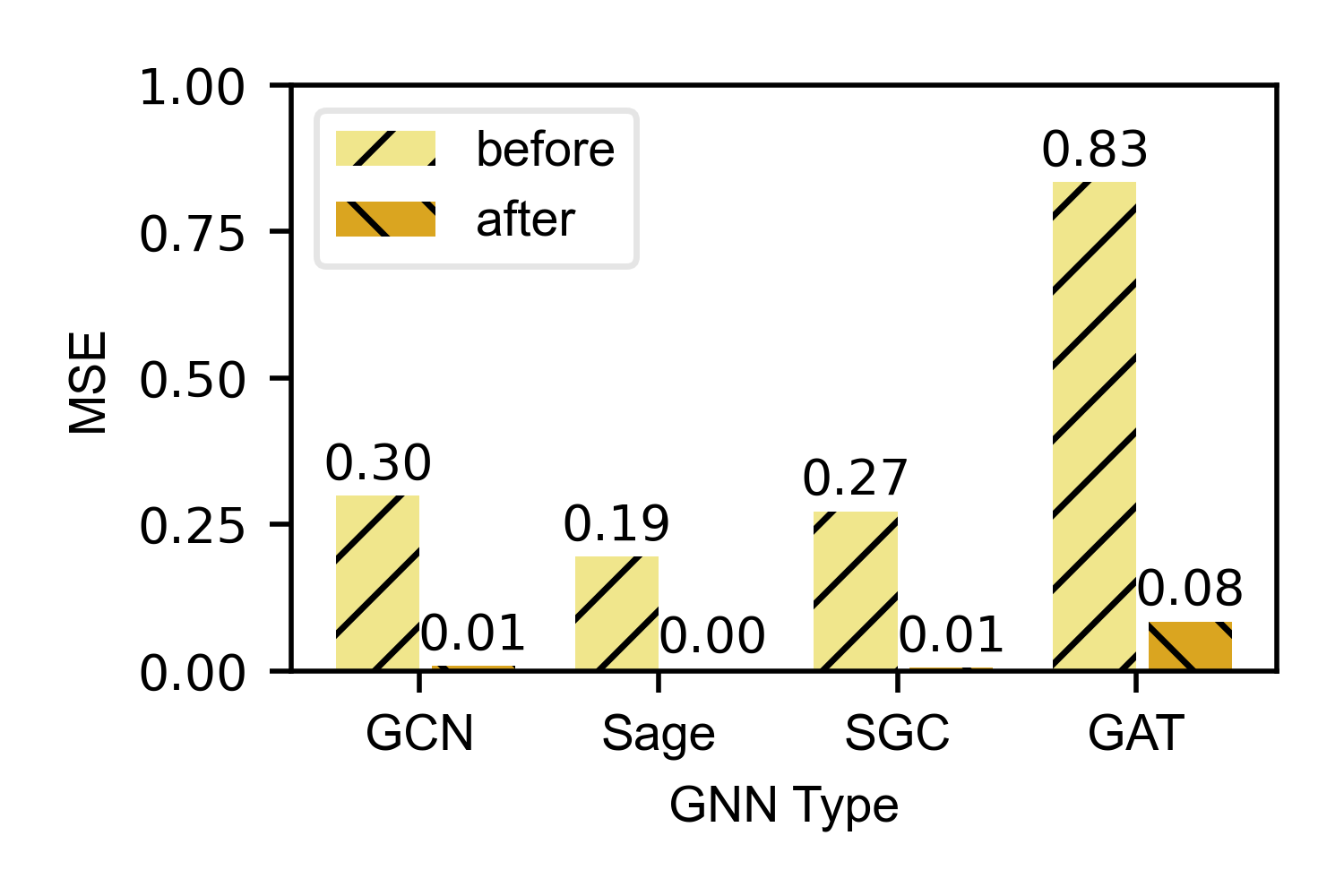}
    \caption{Empirical study on the similarity of graph perturbations. We first randomly perturb the structure of the graph and then simulate the influence of structure perturbations by perturbing the features in the original graph. The yellow bar indicates the MSE before optimization, where there is no feature perturbations. The orange bar indicates the MSE after optimization, where we simulate the structure perturbations with feature perturbations.}
    \label{fig:equivalency}
\end{figure}

   The theoretical and empirical studies support the notion that a feature-only adversarial training strategy is sufficient for modeling the effects of structural perturbations, enabling efficient learning of robust representations in GNNs without direct manipulation of graph structure. To generate perturbations on features, we adopt the Projected Gradient Descent (PGD) method \cite{madry2017towards}. Moreover, the Wasserstein distance ball constraint in \cref{eq:opt} is intractable for high-dimensional data. As a result, we relax the intractable constraint by considering a subset of $\mathcal{B}_{W_\infty(\mu_S, \epsilon)}$, denoted by
	  \begin{equation}\label{eq:cont}
	      \mathcal{B}(p(S), \epsilon) = \{(\Mat{A}_i,{\Mat{X}_{i}^{\prime}})\mid \|\Mat{X}_i - {\Mat{X}_i}^\prime\|_{2,\infty}<\epsilon, i\in \mathcal{V}\},
	  \end{equation}
	  where $\|\cdot\|_{2,\infty}$ denotes the $\ell_{2,\infty}$ norm of a matrix, \ie, the maximum $\ell_2$ norm of the rows of the matrix; $\mathcal{V}$ refers to the node set of the whole graph; $\Mat{A}_i$, $\Mat{X}_i$ are the adjacency matrix and feature matrix of subgraph corresponding to node $i$, respectively. 

    \subsection{Mutual Information Estimation}

    Mutual information is notoriously intractable to optimize for high-dimensional data since it involves integral on the unknown prior distributions. In this section, we derive tractable bound for the optimization of mutual information $I(S;Z^\prime)$, $I(S;Z)$ and $I(S^\prime;Z^\prime|S)$ in the \cref{eq:opt_complete}, respectively.

    \textbf{Maximization of $\mathbf{I(S;Z^\prime)}$ and $\mathbf{I(S;Z)}$.} Existing methods \cite{velickovic2019deep,peng2020graph} typically maximize mutual information on graphs in a contrastive fashion, assuming that the graphs are free from adversarial perturbations. Considering that the adversarial perturbations can hinder the distinguishing of positive and negative samples in the MI maximization, in this section, we propose a fine-grained mutual information estimator named \emph{Subgraph Mutual Information} (SMI) upon the adversarially perturbed graph. Our SMI estimates the mutual information with a subgraph-level summary, which can provide fine-grained information to help representations eliminate the influence of adversarial perturbations. Specifically, the mutual information $I(S;Z^\prime)$, $I(S;Z)$ can be estimated based on the Jensen-Shannon divergence (JSD) MI estimator \cite{nowozin2016f}, respectively
  \begin{equation*}
  \hat{I}(S;Z^\prime) = \mathbb{E}_{p_{S,Z^\prime}}\left[\log D(z^\prime, s)\right] +\mathbb{E}_{p_S,p_{Z^\prime}}\left[\log(1 - D(z^\prime,s))\right],
  \end{equation*}
  \begin{equation*}
\hat{I}(S;Z) = \mathbb{E}_{p_{S,Z}}\left[\log D(z, s)\right]+\mathbb{E}_{p_{S},p_{Z}}\left[\log(1 - D(z, s))\right],
  \end{equation*}
    The discriminator $D$ decides whether the representations and the subgraphs are correlated, which consists of a Simplifying Graph Convolutional Network (SGC) \cite{wu2019simplifying} and a bilinear network. The SGC is leveraged to extract the subgraph-level fine-grained summaries from the subgraphs. The bilinear network is adopted to discriminate whether the summary and the node representations are correlated. Specifically, given subgraph $S$, the discriminator $D$ can be formalized as  
 \begin{equation} \label{eq:dis}
 D(z,s) = \text{Bilinear}\left(\text{SGC}\left(s\right), z\right).
 \end{equation}

    \textbf{Minimization of $\mathbf{I(S^\prime;Z^\prime|S)}$.} To minimize $I(S^\prime;Z^\prime|S)$, we derive a tractable variational upper bound. According to the definition of conditional mutual information, we have the following.

    \begin{equation}
        \begin{aligned}
            I(S^\prime;Z^\prime|S) &= \mathbb{E}_{s,s^\prime,z^\prime\sim p(S,S^\prime,Z^\prime)}\left[\log \frac{p(s)p(s^\prime,z^\prime,s)}{p(s^\prime,s)p(z^\prime,s)}\right]\\
        \end{aligned}
    \end{equation}
    According to the Markov Chain assumption in \cref{definition}, we have $p(s,s^\prime,z^\prime)=p(z^\prime|s^\prime)p(s^\prime|s)p(s)$. Thus, the mutual information can be rewritten as:
     \begin{equation}
        \begin{aligned}
            I(S^\prime;Z^\prime|S) &= \mathbb{E}_{s,s^\prime\sim p(S,S^\prime)}\mathbb{E}_{z^\prime\sim p_\theta(Z^\prime|S^\prime)}\left[\log \frac{p_\theta(z^\prime|s^\prime)}{p(z^\prime|s)}\right]\\ 
            &= \mathbb{E}_{s,s^\prime\sim p(S,S^\prime)}\mathbb{E}_{z^\prime\sim p_\theta(Z^\prime|S^\prime)}\left[\log \frac{p_\theta(z^\prime|s^\prime)p_\theta(z|s)}{p_\theta(z|s)p(z^\prime|s)}\right]\\           &=D_{\text{KL}}\left(p_\theta\left(Z^\prime|S^\prime\right)||p_\theta\left(Z|S\right)\right) - \\ &\ \ \ \ \ D_{\text{KL}}\left(p_\theta\left(Z|S\right)||p\left(Z^\prime|S\right)\right)\\
            &\leq D_{\text{KL}}\left(p_\theta\left(Z^\prime|S^\prime\right)||p_\theta\left(Z|S\right)\right)
        \end{aligned}
    \end{equation}
      where $p_\theta(Z|S)$ and $p_\theta(Z^\prime|S^\prime)$ denote the encoder parameterized by the learnable parameters $\theta$.\footnote{In practice, we assume that $p_\theta(z|s)\sim \mathcal{N}(\mu_\theta(s),\sigma_\theta(s))$ where $\mu_\theta(s)$ and $\sigma_\theta(s)$ are neural networks with learnable parameters $\theta$.} The above equation demonstrates that $I(S^\prime;Z^\prime|S)$ is upper bounded by $D_{\text{KL}}\left(p_\theta\left(Z^\prime|S^\prime\right)||p_\theta\left(Z|S\right)\right)$. The upper bound indicates that we can discard the adversarial information in the adversarial graph by minimizing the KL divergence between the benign representations and adversarial representations. By minimizing the upper bound, we can effectively reduce the influence of adversarial perturbations on the graph, leading to more robust representations.

 \subsection{Complexity Analysis}   
    We theoretically compare the computational cost of our RGIB and state-of-the-art method GRV~\cite{xu2022unsupervised}.  The main difference in computational cost between the two methods lies in the calculation of adversarial perturbations. For our RGIB, which only considers feature perturbations, we only need to calculate the gradients of features. Consequently, the time cost can be $\mathcal{O}(n\_iter*nd)$ where $n\_iter$ is the number of iterations for PGD attack conducted on features. On the other hand, for GRV~\cite{xu2022unsupervised}, the graph PGD attack method \cite{xu2019topology} is adopted to perturb structure, \ie, all entries in the adjacency matrix should be considered. As a result, the time complexity can be $\mathcal{O}(T*n^2+n\_iter*nd)$ where $T$ is the number of iterations for the graph PGD attack method. 
    The complexity is summarized in \cref{tab:complexity}. Note that $m \ll n^2$, we assert that our RGIB, which only involves feature perturbations, is much more efficient than GRV, which consider both structure and feature perturbations. Moreover, we experimentally prove this point in the experiments section.

	 \begin{table}[t]
	\centering
	\normalsize
	\setlength{\tabcolsep}{6pt}
	\caption{Complexity of two methods with different training strategies}\label{tab:complexity}
	\resizebox{.3\textwidth}{!}{
		\begin{tabular}{ccc}
			\toprule
			 &time complexity\cr
			\midrule
			GRV & $\mathcal{O}(T*n^2+n\_{iter}*nd)$\cr
                RGIB(ours) & $\mathcal{O}(n\_{iter}*nd)$ \cr
			\bottomrule
		\end{tabular}	}
\end{table}

	\section{theoretical analysis}
	In this section, we theoretically prove that our proposed RGIB benefits the robustness of downstream classification regardless of the specific form of classifiers. To this end, we introduce two lemmas and a definition first.
	\begin{lemma}{(Fano's Inequality).}\label{lemma:fano-inequality}
		Let $Y$ be a random variable uniformly distributed over a finite set of outcomes $\mathcal{Y}$. For any estimator $\hat{Y}$ such that $Y - X - \hat{Y}$ forms a Markov chain, we have 
		\begin{equation*}
		\Pr(\hat{Y}\neq Y) \geq 1 -  \frac{I(Y, \hat{Y}) + \log 2}{\log |\mathcal{Y}|}.
		\end{equation*}
	\end{lemma}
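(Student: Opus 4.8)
The plan is to reduce the claim to the classical form of Fano's inequality and then specialize it using the uniform distribution of $Y$. First I would introduce the error indicator $E = \mathbf{1}[\hat{Y}\neq Y]$, a binary random variable with $\Pr(E=1) = P_e := \Pr(\hat{Y}\neq Y)$. The crux is to expand the joint conditional entropy $H(E,Y\mid \hat{Y})$ in two different ways via the chain rule for entropy, which is the standard device for extracting a probability of error from a mutual-information quantity.

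On one hand, $H(E,Y\mid \hat{Y}) = H(Y\mid \hat{Y}) + H(E\mid Y,\hat{Y}) = H(Y\mid \hat{Y})$, since $E$ is a deterministic function of $(Y,\hat{Y})$ and hence $H(E\mid Y,\hat{Y}) = 0$. On the other hand, $H(E,Y\mid \hat{Y}) = H(E\mid \hat{Y}) + H(Y\mid E,\hat{Y})$. The first term obeys $H(E\mid \hat{Y}) \leq H(E) \leq \log 2$ because $E$ is binary. For the second term I would condition on the two values of $E$: when $E=0$ we have $Y=\hat{Y}$, so $H(Y\mid E=0,\hat{Y})=0$; when $E=1$, the value $\hat{Y}$ is excluded and $Y$ ranges over at most $|\mathcal{Y}|-1$ outcomes, so $H(Y\mid E=1,\hat{Y}) \leq \log(|\mathcal{Y}|-1)$. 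Weighting by $\Pr(E=1)=P_e$ gives $H(Y\mid E,\hat{Y}) \leq P_e\log(|\mathcal{Y}|-1)$. Equating the two expansions yields the standard Fano bound $H(Y\mid \hat{Y}) \leq \log 2 + P_e\log(|\mathcal{Y}|-1)$.

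The final step specializes this to our setting. Since $Y$ is uniform over $\mathcal{Y}$, we have $H(Y)=\log|\mathcal{Y}|$, and the identity $I(Y;\hat{Y}) = H(Y) - H(Y\mid \hat{Y})$ gives $H(Y\mid \hat{Y}) = \log|\mathcal{Y}| - I(Y;\hat{Y})$. Substituting this into the Fano bound and using $\log(|\mathcal{Y}|-1)\leq \log|\mathcal{Y}|$ produces $\log|\mathcal{Y}| - I(Y;\hat{Y}) \leq \log 2 + P_e\log|\mathcal{Y}|$. Dividing through by $\log|\mathcal{Y}|$ and rearranging gives exactly $P_e \geq 1 - \frac{I(Y;\hat{Y}) + \log 2}{\log|\mathcal{Y}|}$, which is the stated inequality.

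I expect the only delicate point to be the bound $H(Y\mid E=1,\hat{Y}) \leq \log(|\mathcal{Y}|-1)$, which rests on the observation that an error event forces $Y$ to avoid the predicted value $\hat{Y}$, shrinking its effective support. Note also that the Markov-chain hypothesis $Y-X-\hat{Y}$ is not actually needed for this lemma in isolation; it becomes relevant only in the subsequent theorem, where the data-processing inequality bounds $I(Y;\hat{Y})$ by an information quantity the encoder controls. Everything else is routine manipulation of entropy and mutual information, so I would not belabor it.
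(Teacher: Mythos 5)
Your proof is correct: it is the standard textbook derivation of Fano's inequality (expanding $H(E,Y\mid\hat{Y})$ two ways via the chain rule, bounding $H(E\mid\hat{Y})\leq\log 2$ and $H(Y\mid E=1,\hat{Y})\leq\log(|\mathcal{Y}|-1)$, then specializing to uniform $Y$ so that $H(Y\mid\hat{Y})=\log|\mathcal{Y}|-I(Y;\hat{Y})$). Note that the paper itself states this lemma without proof, treating it as a classical result to be invoked in the proof of its main theorem, so there is no in-paper argument to compare against; your additional observation that the Markov-chain hypothesis $Y-X-\hat{Y}$ is not needed for the lemma in isolation, but only later when the data-processing inequality is applied, is also accurate and consistent with how the paper actually uses these two lemmas.
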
 
	\begin{lemma}{(Data-Processing Inequality).}\label{lemma:data-processing}
		For any Markov chain $X - Y - Z$, we have 
		\begin{equation*}
		I(X;Y) \geq I(X;Z) \ and \ I (Y;Z) \geq I(X;Z).
		\end{equation*}
	\end{lemma}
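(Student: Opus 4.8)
The plan is to derive both inequalities from a single application of the chain rule for mutual information, together with the non-negativity of conditional mutual information, using the Markov assumption only to eliminate one conditional term.

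First I would expand the joint mutual information $I(X;Y,Z)$ in the two different orders permitted by the chain rule:
\begin{equation*}
I(X;Y,Z) = I(X;Z) + I(X;Y|Z) = I(X;Y) + I(X;Z|Y).
\end{equation*}
The Markov assumption $X-Y-Z$ states that $X$ and $Z$ are conditionally independent given $Y$, so $I(X;Z|Y)=0$. Substituting this into the identity above yields $I(X;Y) = I(X;Z) + I(X;Y|Z)$. Since conditional mutual information is always non-negative, the term $I(X;Y|Z)\geq 0$ can be dropped, giving the first inequality $I(X;Y)\geq I(X;Z)$.

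For the second inequality I would exploit the symmetry of the Markov property: $X-Y-Z$ is a Markov chain if and only if $Z-Y-X$ is one. Applying the identical chain-rule decomposition with the roles of $X$ and $Z$ interchanged gives $I(Z;Y) = I(Z;X) + I(Z;Y|X)$, where the vanishing term is now $I(Z;X|Y)=0$. Dropping the non-negative term $I(Z;Y|X)$ and using the symmetry $I(Z;X)=I(X;Z)$ then establishes $I(Y;Z)\geq I(X;Z)$.

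I expect no genuine obstacle in this argument; the only subtlety is to make explicit that the lone Markov condition $I(X;Z|Y)=0$ suffices for both claims and that the symmetry of mutual information lets one reuse the same decomposition after swapping $X$ and $Z$. The sole external fact invoked, the non-negativity of (conditional) mutual information, is standard and requires no separate justification.
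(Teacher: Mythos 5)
Your proof is correct. Note that the paper itself states this lemma without proof, treating the Data-Processing Inequality as standard background (it is a textbook result, e.g., Cover and Thomas); your argument --- the two-way chain-rule expansion of $I(X;Y,Z)$, the Markov condition $I(X;Z|Y)=0$, non-negativity of conditional mutual information, and the symmetry of the Markov property to get the second inequality --- is exactly the canonical proof, so there is nothing to reconcile against the paper.
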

	
	\begin{definition}{(Adversarial Risk \cite{zhu2020learning}).}
			Let $(\mathcal{X}, \Delta)$ be the input metric space and $\mathcal{Y}$ be the set of labels. Let $\mu_{XY}$ be the underlying distribution of the input and label pairs. For any classifier $f : \mathcal{X} \rightarrow \mathcal{Y}$, the adversarial risk of $f$ with respect to $\epsilon \geq 0$ is defined as
			\begin{equation*}
			\!\!\!\!  AdvRisk_\epsilon(f) = \Pr_{x,y \sim p(X,Y)}\left[\exists x^\prime \in \mathcal{B}(x, \epsilon) \ s.t. f(x^\prime) \neq y\right].
			\end{equation*}
		\end{definition}
	

	Adversarial risk measures the vulnerability of a given classifier to adversarial perturbations. The lower the adversarial risk is, the more robust the classifier $f$ is. We prove a theoretical connection between our RGIB objective in \cref{eq:opt} and the adversarial risk of downstream classifiers. Specifically, when $\beta=0, \alpha=1$, our RGIB objective $\hat{\mathcal{L}}_{RGIB}$ works as a lower bound for the adversarial risk regardless of the specific form of downstream classifiers.
		
		\begin{theorem}\label{th2}
		Let $(\mathcal{S}, \Delta)$ be the input metric space, $\mathcal{Y}$ be the set of labels and $p(S,Y)$ be the underlying joint probability distribution. Assume the marginal distribution of labels $p(Y)$ is a uniform distribution over $\mathcal{Y}$. Consider the feature space $\mathcal{Z}$ and the set of downstream classifier $\mathcal{F}=\{f:\mathcal{Z}\rightarrow \mathcal{Y}\}$. Given $\epsilon\geq 0$, for any encoder $e:\mathcal{S}\rightarrow \mathcal{Z}$, the following inequality holds
		\begin{equation*}\label{eq:conclusion}
		\begin{aligned}
		\inf_{f\in \mathcal{F}}AdvRisk_\epsilon(f\circ e) &\geq 1 - \frac{\inf_{p(S^\prime)\in \mathcal{B}_{W_\infty}}I(S;Z^\prime)+\log 2}{\log |\mathcal{Y}|} \\
        &\geq 1 - \frac{\inf_{p(S^\prime)\in \mathcal{B}_{W_\infty}}I(S^\prime;Z^\prime)+\log 2}{\log |\mathcal{Y}|}. . 
		\end{aligned}
		\end{equation*}
	\end{theorem}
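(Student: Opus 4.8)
The plan is to lower-bound the adversarial risk by reducing it to a worst-case \emph{natural} risk, and then to invoke Fano's inequality together with the data-processing inequality. First I would observe that any adversarial distribution $p(S^\prime)\in\mathcal{B}_{W_\infty}(p(S),\epsilon)$ is realized by a perturbation kernel $p(S^\prime\mid S)$ whose displacement is at most $\epsilon$ almost surely, so that the realized $S^\prime$ lies in $\mathcal{B}(S,\epsilon)$ pointwise. Consequently the event ``$\exists\,s^\prime\in\mathcal{B}(s,\epsilon)$ with $f(e(s^\prime))\neq y$'' contains the event that this particular perturbation is misclassified, giving the pointwise indicator inequality and hence $AdvRisk_\epsilon(f\circ e)\geq \Pr[f(e(S^\prime))\neq Y]$ for every admissible $S^\prime$. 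This converts the combinatorial ``exists'' into an expectation amenable to information-theoretic tools.

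Next I would fix such an $S^\prime$, set $Z^\prime=e(S^\prime)$ and $\hat Y=f(Z^\prime)$, and note the Markov chain $Y-S-S^\prime-Z^\prime-\hat Y$, since the perturbation and encoder depend on $Y$ only through $S$. Because $p(Y)$ is uniform over $\mathcal{Y}$, Fano's inequality (\cref{lemma:fano-inequality}) applied with $X=Z^\prime$ gives $\Pr[\hat Y\neq Y]\geq 1-\frac{I(Y;\hat Y)+\log 2}{\log|\mathcal{Y}|}$. I would then apply the data-processing inequality (\cref{lemma:data-processing}) twice along the chain: $I(Y;\hat Y)\leq I(Y;Z^\prime)$ from $Y-Z^\prime-\hat Y$, and $I(Y;Z^\prime)\leq I(S;Z^\prime)$ from $Y-S-Z^\prime$. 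Chaining these with the reduction above yields, for every admissible $S^\prime$, the bound $AdvRisk_\epsilon(f\circ e)\geq 1-\frac{I(S;Z^\prime)+\log 2}{\log|\mathcal{Y}|}$.

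To close the first inequality I would note that this holds for every $p(S^\prime)\in\mathcal{B}_{W_\infty}(p(S),\epsilon)$ while the left side is independent of $S^\prime$; taking the perturbation achieving (or approaching) $\inf_{p(S^\prime)}I(S;Z^\prime)$ maximizes the right-hand side and produces the stated bound with the infimum, which then survives $\inf_{f\in\mathcal{F}}$ on the left since $f$ was arbitrary. The second inequality is immediate: the data-processing inequality (\cref{lemma:data-processing}) on $S-S^\prime-Z^\prime$ gives $I(S;Z^\prime)\leq I(S^\prime;Z^\prime)$ for each $S^\prime$ (equivalently, from the decomposition in \cref{th:factor} and nonnegativity of the adversarial term $I(S^\prime;Z^\prime\mid S)$), so $\inf I(S;Z^\prime)\leq \inf I(S^\prime;Z^\prime)$ and the lower bound weakens exactly as claimed.

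The step I expect to be the main obstacle is the first reduction: rigorously justifying that the infimizing distribution in the $W_\infty$-ball is a bona fide adversarial strategy and that the ``exists''-based adversarial risk dominates the natural risk under the corresponding (possibly randomized) coupling. Care is needed to ensure the infimizing $p(S^\prime)$ is supported in $\bigcup_s\mathcal{B}(s,\epsilon)$ rather than being an abstract measure, and to handle stochastic couplings through the pointwise indicator argument. Once this reduction is secured, the Fano-plus-data-processing machinery follows routinely from the two lemmas.
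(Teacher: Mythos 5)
Your proof is correct and takes essentially the same route as the paper's: the Markov chain $Y - S - S^\prime - Z^\prime - \hat{Y}$, Fano's inequality (\cref{lemma:fano-inequality}) combined with data processing (\cref{lemma:data-processing}) to get $\Pr(\hat{Y}\neq Y) \geq 1 - \frac{I(S;Z^\prime)+\log 2}{\log|\mathcal{Y}|}$, the same quantifier manipulation over $p(S^\prime)$ and $f$, and the decomposition of \cref{th:factor} (equivalently, data processing on $S-S^\prime-Z^\prime$) for the second inequality. The only presentational difference is that the paper cites \cite{zhu2020learning} for the identity $AdvRisk_\epsilon(f\circ e)=\sup_{p(S^\prime)\in\mathcal{B}_{W_\infty}}\Pr(\hat{Y}\neq Y)$, whereas you prove the one direction actually needed ($AdvRisk_\epsilon(f\circ e)\geq \Pr(f(e(S^\prime))\neq Y)$ for each admissible $p(S^\prime)$) directly from the $W_\infty$ coupling characterization, which makes the argument slightly more self-contained but does not change its substance.
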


	\begin{proof}	
		\noindent For any given $p(S^\prime) \in \mathcal{B}_{W_\infty}$ and $f\in \mathcal{F}$, we have the following Markov chain
		\begin{equation} \label{eq:markov}
		Y - S - S^\prime  \overset{e}{-} Z^\prime \overset{f}{-} \hat{Y},
		\end{equation}
		\noindent where $\hat{Y} = f(e(S^\prime))$ indicates the downstream prediction made by $f$ on the perturbed graph $S^\prime$. By applying \cref{lemma:fano-inequality} and \cref{lemma:data-processing}, we arrive at the following inequality	\begin{equation}\label{eq:lemma_inequality}
		\begin{aligned}
		\Pr (\hat{Y}\neq Y) &\geq 1 - \frac{I(S;Z^\prime) + \log 2}{\log |\mathcal{Y}|}.\\ 
		\end{aligned}
		\end{equation}
		For any classifier $f:\mathcal{Z \rightarrow \mathcal{Y}}$, according to \cite{zhu2020learning}, the adversarial risk can be rewritten as:	
        \begin{equation} \label{eq:rewitten}
        \begin{aligned}
            AdvRisk_\epsilon(f\circ e) &= \sup_{p(S^\prime)\in \mathcal{B}_{W_\infty}} \Pr\left(f(e(S^\prime))\neq Y\right) \\
            &= \sup_{p(S^\prime)\in \mathcal{B}_{W_\infty}} \Pr(\hat{Y}\neq Y).
        \end{aligned}
		\end{equation}
        Substituting \cref{eq:lemma_inequality} into \cref{eq:rewitten}, we get a lower bound of adversarial risk		\begin{equation}\label{eq:lower_bound}
		\begin{aligned}
		&\ \ \ \ \ \inf_{f\in\mathcal{F}}\left[AdvRisk_\epsilon(f\circ e)\right] \\
		&\geq \inf_{f\in\mathcal{F}}\sup_{p(S^\prime)\in \mathcal{B}_{W_\infty}} \left[1 - \frac{I(S;Z^\prime) + \log 2}{\log |\mathcal{Y}|}\right]. \\
		\end{aligned}
		\end{equation}
		\noindent Note that the right side of \cref{eq:lower_bound} is irrelevant with $f$, thus the infimum over $f$ can be omitted. By taking the irrelevant item out of the supremum, we can derive the first inequality:	\begin{equation*}
		\begin{aligned}
		\inf_{f\in\mathcal{F}} AdvRisk_\epsilon(f\circ e) 
        \geq 1 -  \frac{\inf_{p(S^\prime)\in \mathcal{B}_{W_\infty}}I(S;Z^\prime) + \log 2}{\log |\mathcal{Y}|}\\
		\end{aligned}
		\end{equation*}
        Furthermore, according to the \cref{th:factor}, we can prove that 
        \begin{equation}
        \begin{aligned}
            I(S^\prime; Z^\prime) = I(S^\prime; Z^\prime|S) + I(S;Z^\prime) \geq I(S;Z^\prime)
        \end{aligned}
        \end{equation}
       The second inequality can be easily proved by taking into above inequality. The proof is completed. 
	\end{proof}

	From \cref{th2}, the first inequality proves that our RGIB objective $\hat{\mathcal{L}}_{RGIB}$ in \cref{eq:opt} provides a lower bound for the minimum of adversarial risk when $\beta=0, \alpha=1$. In other words, any downstream classifier cannot be robust if the RGIB objective is low. This connection indicates that we can get robust node representations without the knowledge of downstream classifiers and label information by maximizing the infimum of RGIB instead. The right side of the second inequality is the formulation of the Infomax objective in GRV~\cite{xu2022unsupervised}, which involves a biased assumption. In the second inequality in \cref{th2},  we theoretically prove that our unbiased RGIB provides a tighter lower bound for the minimum of adversarial risk comparing with the Infomax objective. Experimental results in the experiments section confirm the theoretical analysis and demonstrate the superiority of our method.

\begin{table*}[]
	\centering
	\caption{Summary of results for the node classification, link prediction, and community detection tasks on both benign graphs and adversarial graphs. Bold figures indicate the best performance and underlined figures indicate the second best method.} \label{tab:acc}
	\vspace{-0.05in}
 \setlength{\tabcolsep}{2pt}
\resizebox{\textwidth}{!}{
	\begin{tabular}{l|l|cccc|cccc|cccc} \toprule
		& & \multicolumn{4}{c|}{Node classification (ACC\%)} & \multicolumn{4}{c|}{Link prediction (AUC\%)} & \multicolumn{4}{c}{Community detection (NMI\%)} \\ 
		& Model &Cora &Citeseer &Cora\_ML &Pubmed &Cora &Citeseer  &Cora\_ML &Pubmed &Cora &Citeseer  &Cora\_ML &Pubmed \\ \hline
\multirow{8}{*}{\rotatebox{90}{Benign}}&
GAE&$67.5\pm 2.3$&$47.9\pm 17.2$&$51.3\pm 5.3$&$71.7\pm 3.0$&$76.1\pm 1.1$&$76.3\pm 6.4$&$77.6\pm 1.7$&$\underline{86.3\pm 0.6}$&$37.8\pm 2.7$&$25.8\pm 5.0$&$23.6\pm 3.1$&$19.8\pm 5.0$\cr
&DGI&$\underline{79.5\pm 0.8}$&$69.7\pm 0.7$&$77.4\pm 0.9$&$76.0\pm 0.8$&$80.5\pm 1.1$&$80.6\pm 1.0$&$86.0\pm 1.0$&$79.3\pm 0.5$&$\bf{54.6\pm 0.9}$&$42.6\pm 0.6$&$54.3\pm 1.0$&$25.7\pm 0.4$\cr
&GRACE&$75.9\pm 2.0$&$66.9\pm 1.0$&$74.2\pm 3.4$&$\bf{78.7\pm 0.7}$&$\bf{87.6\pm 0.9}$&$\bf{91.4\pm 1.2}$&$84.2\pm 1.3$&$86.1\pm 1.4$&$50.0\pm 1.4$&$36.7\pm 2.1$&$49.2\pm 3.2$&$23.4\pm 3.6$\cr
&MVGRL&$\bf{79.8\pm 0.8}$&$68.4\pm 1.1$&$72.0\pm 2.0$&$72.0\pm 1.9$&$81.6\pm 0.9$&$78.4\pm 1.8$&$84.5\pm 0.7$&$73.7\pm 1.2$&$55.2\pm 0.7$&$37.5\pm 1.1$&$55.8\pm 1.1$&$\underline{26.3\pm 2.7}$\cr
&DGI-Jaccard&$79.4\pm 0.6$&$\bf{70.1\pm 0.7}$&$77.4\pm 0.8$&$76.6\pm 0.6$&$79.6\pm 1.2$&$80.8\pm 1.1$&$\underline{86.1\pm 1.1}$&$79.3\pm 0.6$&$\underline{54.4\pm 0.9}$&$42.6\pm 0.5$&$54.2\pm 0.9$&$25.7\pm 0.4$\cr
&DGI-SVD&$74.2\pm 0.7$&$69.5\pm 0.7$&$62.6\pm 1.3$&OOM&$75.8\pm 0.8$&$78.7\pm 1.2$&$79.9\pm 1.8$&OOM&$48.2\pm 0.5$&$40.4\pm 0.6$&$47.3\pm 0.7$&OOM\cr
&GRV&$76.4\pm 0.6$&$\underline{70.0\pm 0.9}$&$\underline{78.6\pm 1.0}$&$74.5\pm 1.3$&$82.3\pm 2.3$&$86.4\pm 0.9$&$86.0\pm 1.5$&$77.2\pm 1.0$&$54.2\pm 1.1$&$\underline{43.2\pm 0.8}$&$\bf{58.3\pm 0.6}$&$22.4\pm 2.4$\cr
&RGIB&$78.5\pm 0.5$&$68.8\pm 1.1$&$\bf{80.3\pm 0.8}$&$\underline{77.8\pm 0.7}$&$\underline{87.5\pm 0.6}$&$\underline{89.9\pm 1.7}$&$\bf{88.2\pm 1.6}$&$\bf{87.8\pm 0.3}$&$53.2\pm 1.7$&$\bf{44.0\pm 0.6}$&$\underline{58.0\pm 0.9}$&$\bf{26.3\pm 0.6}$\cr
\midrule
\multirow{8}{*}{\rotatebox{90}{Adv}}&
GAE&$49.3\pm 2.4$&$13.7\pm 3.0$&$36.5\pm 3.8$&$19.0\pm 2.9$&$76.2\pm 1.4$&$75.4\pm 4.1$&$75.2\pm 1.6$&$\underline{86.0\pm 0.6}$&$31.9\pm 2.4$&$16.6\pm 2.5$&$12.2\pm 1.1$&$16.7\pm 2.0$\cr
&DGI&$45.1\pm 1.2$&$9.2\pm 0.7$&$29.3\pm 1.0$&$12.8\pm 0.6$&$75.5\pm 1.0$&$76.3\pm 1.7$&$73.5\pm 1.5$&$77.9\pm 0.5$&$39.4\pm 3.6$&$29.3\pm 0.6$&$27.4\pm 1.4$&$\underline{25.0\pm 0.4}$\cr
&GRACE&$46.6\pm 3.3$&$12.3\pm 2.0$&$\underline{36.9\pm 3.4}$&$10.7\pm 1.2$&$\underline{85.1\pm 0.7}$&$\underline{83.4\pm 1.7}$&$76.8\pm 1.8$&$84.6\pm 1.3$&$42.2\pm 1.8$&$25.8\pm 3.0$&$23.7\pm 1.8$&$24.0\pm 3.4$\cr
&MVGRL&$57.1\pm 1.7$&$11.9\pm 0.8$&$28.0\pm 0.2$&$10.1\pm 2.1$&$74.5\pm 1.2$&$73.2\pm 0.7$&$66.6\pm 0.8$&$70.9\pm 1.2$&$37.7\pm 1.5$&$19.8\pm 0.7$&$7.2\pm 2.7$&$23.1\pm 7.0$\cr
&DGI-Jaccard&$52.5\pm 1.7$&$9.3\pm 0.7$&$29.5\pm 1.1$&$12.9\pm 0.4$&$73.5\pm 1.1$&$76.1\pm 1.8$&$73.4\pm 1.5$&$77.9\pm 0.6$&$39.5\pm 3.3$&$29.6\pm 0.5$&$26.6\pm 1.8$&$25.0\pm 0.4$\cr
&DGI-SVD&$66.6\pm 2.6$&$23.4\pm 1.5$&$29.0\pm 0.3$&OOM&$58.0\pm 0.6$&$71.5\pm 0.4$&$58.1\pm 1.8$&OOM&$35.3\pm 1.7$&$23.6\pm 0.3$&$\underline{31.7\pm 3.1}$&OOM\cr
&GRV&$\underline{71.5\pm 2.2}$&$\underline{26.4\pm 5.7}$&$34.7\pm 1.5$&$\underline{28.5\pm 1.8}$&$82.4\pm 2.1$&$82.4\pm 4.5$&$\underline{78.9\pm 1.7}$&$79.9\pm 0.7$&$\underline{49.6\pm 2.8}$&$\underline{31.4\pm 3.0}$&$30.5\pm 4.7$&$24.0\pm 2.9$\cr
&RGIB&$\bf{74.9\pm 1.0}$&$\bf{38.0\pm 6.6}$&$\bf{47.7\pm 4.1}$&$\bf{60.4\pm 1.7}$&$\bf{87.2\pm 1.0}$&$\bf{89.3\pm 1.8}$&$\bf{85.6\pm 2.3}$&$\bf{87.4\pm 0.5}$&$\bf{49.9\pm 1.3}$&$\bf{41.3\pm 2.4}$&$\bf{47.8\pm 2.0}$&$\bf{26.7\pm 0.5}$\cr

\bottomrule
\end{tabular}
 }
\vspace{-0.05in}
\end{table*}

\begin{figure*}[htb]
	\centering
    \includegraphics[width=\textwidth]{./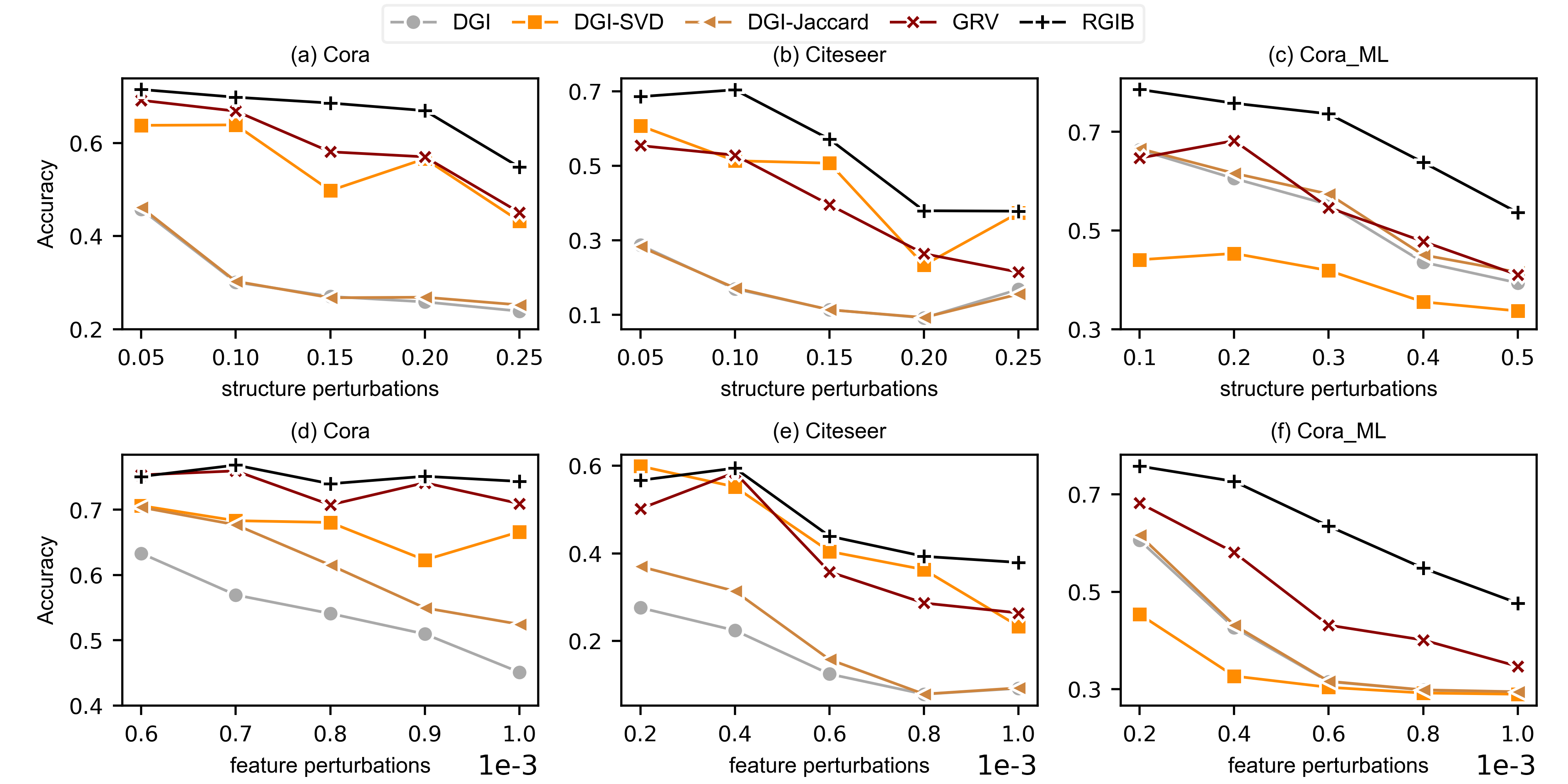}
    \caption{Classification accuracy against adversarial attack with different budgets.}\label{fig:acc_diff_perturbations}
\end{figure*}

\section{experiments}
In this section, we conduct extensive experimental studies to evaluate our proposed RGIB method. We first introduce the details of our adopted datasets, baseline methods, and experimental settings. And then, we discuss the results of the experiments. Specifically, we aim to answer the following questions: 
\begin{itemize}
	    \item \textbf{RQ1} How does our proposed method perform compared to other baseline methods when the graph is adversarially attacked?
	    \item \textbf{RQ2} Do our proposed adversarial training strategy and mutual information estimator benefit the performance of our method as expected?
	    \item \textbf{RQ3} How do different hyper-parameters affect the performance of our method?
\end{itemize}

\subsection{Experimental Setup}
\subsubsection{Datasets} 
We conduct experiments on four benchmark datasets, including Cora, Citeseer, Pubmed \cite{sen2008collective} and Cora\_ML \cite{mccallum2000automating}. The four datasets are collected from citation networks where the nodes correspond to documents and edges indicate citation relationships. The statistical details are shown in \cref{tab:datasets}. For Cora, Citeseer, and Pubmed, we follow the standard data split \cite{kipf2016semi}. For Cora\_ML, the nodes are randomly split to train, validation, and test sets with 10\%, 10\% and 80\% of the nodes, separately.

\begin{table}[t]
	\centering
	\normalsize
	\setlength{\tabcolsep}{2pt}
	\caption{Dataset statistics}\label{tab:datasets}
	\resizebox{.4\textwidth}{!}{
		\begin{tabular}{lrrrrr}
			\toprule
			Statics&\#Nodes&\#Edges&\#Features&\#Classes&  \makecell[c]{Average\\Degree}\cr
			\midrule
			Cora &$2,708$&$5,278$&$1,433$&$7$&$4.90$\cr
			Citeseer&$3,327$&$4,552$&$3,703$&$6$&$3.74$\cr
			Cora\_ML&$2,995$&$8,158$&$2,879$&$7$&$6.45$\cr
			Pubmed&$19,717$&$44,324$&$500$&$3$&$5.50$\cr
			\bottomrule
		\end{tabular}	}
		\vspace{-0.5cm}
\end{table}

\subsubsection{Baselines} 
As far as we know, there are no existing methods focusing on unsupervised robust representation learning except for GRV\cite{xu2022unsupervised}. Thus we consider four non-robust graph representation learning methods (GAE, DGI, MVGRL, and GRACE), two preprocessing-based defense methods (DGI-Jaccard, DGI-SVD) and GRV, \ie, a robust representation learning method. These methods are briefly described as follows:
	\begin{itemize}
	\item \textbf{GAE \cite{kipf2016variational}.} GAE is a widely used unsupervised method for graph data, which learns the representations of nodes with a GCN based auto-encoder.
	\item \textbf{DGI \cite{velickovic2019deep}.} Deep Graph Infomax (DGI) is a method that learns node representations by maximizing mutual information between local and global representations of a graph.
	\item \textbf{MVGRL \cite{hassani2020contrastive}.} MVGRL is a self-supervised approach for node-level and graph-level representation learning that contrasts multiple structural views of graphs to learn representations.
	\item \textbf{GRACE \cite{Zhu2020vf}} GRACE is a method that employs graph augmentation at both topology and attribute levels to construct diverse node contexts for contrastive learning.
	\item \textbf{DGI-Jaccard \cite{wu2019adversarial}.}DGI-Jaccard is a preprocessing-based defense method that removes edges with low Jaccard similarity before applying DGI to learn the representations on the preprocessed graph.
	\item \textbf{DGI-SVD \cite{entezari2020all}.}  DGI-SVD is another preprocessing-based defense method that denoises the adjacency matrix using a low-rank approximation before applying DGI to learn the representations on the preprocessed graph.
	\item \textbf{GRV \cite{xu2022unsupervised}.} Graph Representation Vulnerability (GRV) is a robustness measure that aims to learn robust node representations by maximizing the mutual information (MI) between the adversarial graph and the corresponding adversarial representation.
    \end{itemize}

\subsubsection{Implementation Details} In our experiments, we adopt a two-layer GCN \cite{kipf2016semi} as the encoder for our proposed RGIB method. The hidden dimensions of GCN are set as 512 on all datasets. For simplicity, the SGC in \cref{eq:dis} shares the same parameters with GCN.
In the preprocessing procedure, the features of nodes are normalized with $\ell_2$ norm. In the training phase, we adopt the PGD method~\cite{madry2017towards} to optimize the feature perturbations and the Adam optimizer to optimize the encoder with learning rate as $0.001$. In the evaluation phase, we employ the graph PGD attack \cite{xu2019topology} and the PGD method to generate global perturbations on the structure and node features, separately. We consider three different downstream tasks to evaluate the robustness of node representations: node classification, link prediction, and community detection. For node classification, logistic regression is adopted as the downstream classifier and the classification accuracy is reported. For link prediction, we adopt the logistic regression to predict whether an edge exists or not. $10\%$ edges are sampled as positive test set and the same number of edges are randomly sampled as the negative test set. The remaining edges is used as the training set. We report the area under the curve (AUC) as the evaluation metric for link prediction. For community detection, we detect the potential communities with k-means algorithm and report the normalized mutual information (NMI) to measure the performance. All experiments are conducted on a machine with CPU E5-2650 v4 @ 2.20GHz and a 2080 Ti GPU with 12GB RAM.

\begin{table*}[]
	\centering
	\caption{Summary of results for the node classification against various adversarial attack methods. Bold figures indicate the best performance and underlined figures indicate the second best method.} \label{tab:acc_diff_attack}
	\vspace{-0.05in}
\resizebox{\textwidth}{!}{
	\begin{tabular}{l|ccc|ccc|ccc} \toprule
		& \multicolumn{3}{c|}{MinMax} & \multicolumn{3}{c|}{Metattack} & \multicolumn{3}{c}{CLGA} \\ 
		 Model &Cora &Citeseer &Cora\_ML &Cora &Citeseer  &Cora\_ML &Cora &Citeseer  &Cora\_ML \\ \hline
GAE&$58.1\pm 2.9$&$57.3\pm 1.6$&$57.7\pm 3.6$&$38.3\pm 17.9$&$29.6\pm 8.9$&$21.5\pm 3.8$&$\underline{43.0\pm 4.0}$&$37.8\pm 3.6$&$38.2\pm 2.5$\cr
DGI&$71.2\pm 1.0$&$62.3\pm 0.8$&$65.0\pm 0.9$&$58.5\pm 0.9$&$37.8\pm 1.3$&$18.1\pm 1.4$&$32.0\pm 1.9$&$30.0\pm 0.7$&$32.8\pm 0.9$\cr
GRACE&$65.9\pm 2.1$&$60.5\pm 1.6$&$61.1\pm 3.6$&$58.7\pm 3.0$&$40.1\pm 1.7$&$20.0\pm 1.5$&$42.2\pm 1.8$&$\underline{42.5\pm 1.2}$&$\underline{40.4\pm 1.8}$\cr
MVGRL&$\underline{72.1\pm 1.1}$&$66.1\pm 0.6$&$68.4\pm 0.8$&$55.5\pm 3.0$&$43.0\pm 1.5$&$21.0\pm 2.0$&$29.9\pm 0.5$&$28.9\pm 0.1$&$28.4\pm 0.1$\cr
DGI-Jaccard&$72.1\pm 1.2$&$\underline{66.6\pm 1.3}$&$\underline{70.5\pm 1.1}$&$58.5\pm 1.1$&$38.7\pm 1.3$&$17.7\pm 1.3$&$32.1\pm 1.9$&$29.8\pm 0.8$&$32.9\pm 0.8$\cr
DGI-SVD&$69.0\pm 1.5$&$65.4\pm 0.9$&$68.6\pm 1.0$&$60.0\pm 1.7$&$48.4\pm 1.5$&$36.8\pm 1.0$&$30.9\pm 0.4$&$28.5\pm 0.1$&$28.3\pm 0.1$\cr
GRV&$72.0\pm 1.7$&$64.8\pm 1.2$&$67.8\pm 1.0$&$\underline{61.5\pm 7.1}$&$\underline{54.7\pm 6.5}$&$\underline{47.3\pm 8.8}$&$40.6\pm 1.1$&$35.7\pm 0.9$&$34.9\pm 1.0$\cr
RGIB&$\bf{74.4\pm 0.8}$&$\bf{67.7\pm 0.4}$&$\bf{71.2\pm 0.9}$&$\bf{66.1\pm 3.0}$&$\bf{58.3\pm 1.6}$&$\bf{60.7\pm 2.0}$&$\bf{54.6\pm 4.9}$&$\bf{43.6\pm 3.1}$&$\bf{47.4\pm 4.6}$\cr
\bottomrule
\end{tabular}
 }
\vspace{-0.05in}
\end{table*}

\begin{table*}[t]
	\normalsize
	\centering
	\caption{Visualization of representations with t-SNE on Cora} \label{tab:rep}
	\setlength{\tabcolsep}{0pt}
	\resizebox{0.9\textwidth}{!}{
		\begin{tabular}{ccc}
			  \scriptsize{DGI} & \scriptsize{GRV} & \scriptsize{RGIB}\\  
			\begin{minipage}[b]{0.2\textwidth}
				\centering				\raisebox{-.5\height}{\includegraphics[width=0.8\linewidth]{./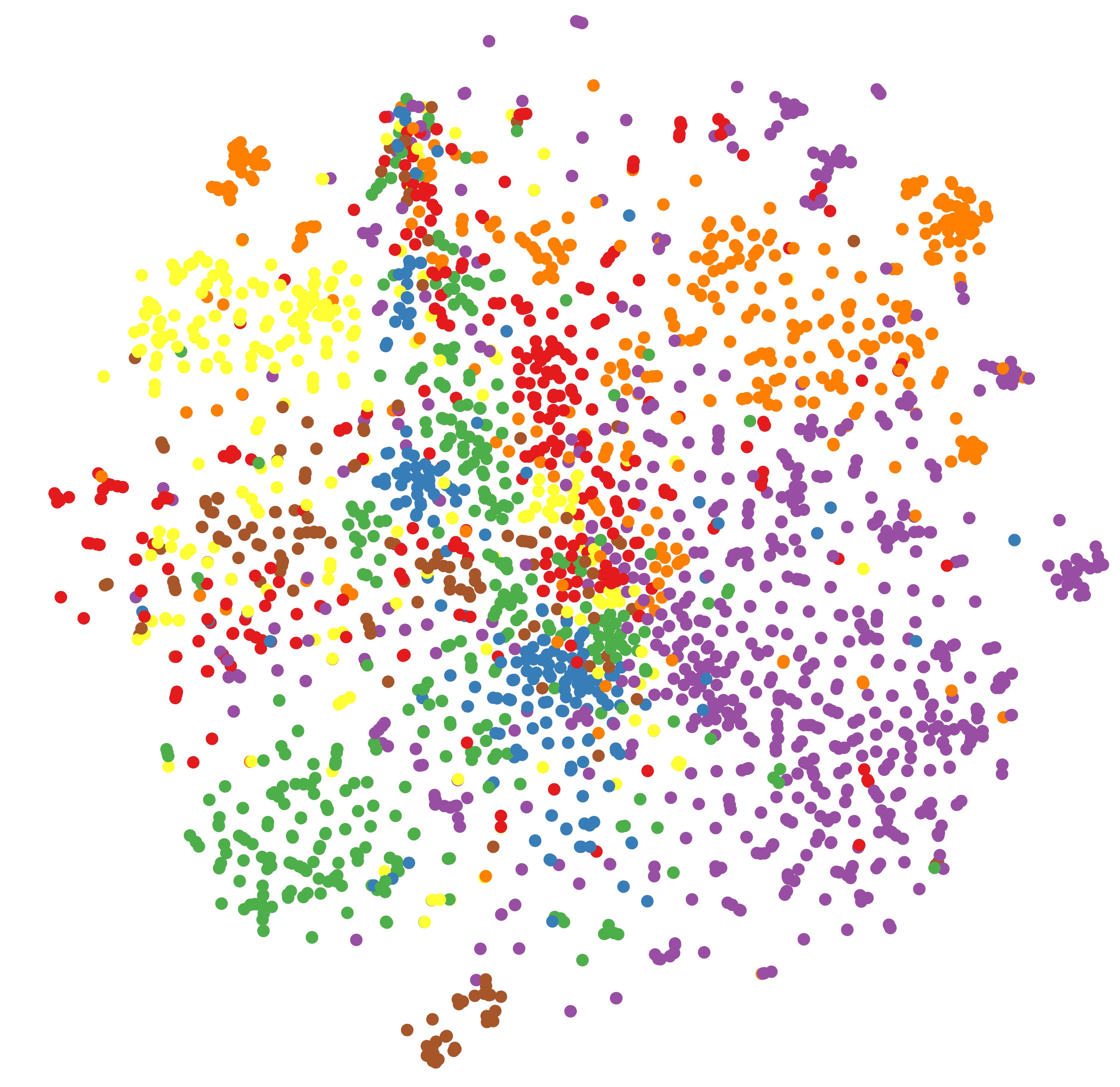}}
			\end{minipage}
			&\begin{minipage}[b]{0.2\textwidth}
				\centering				\raisebox{-.5\height}{\includegraphics[width=0.8\linewidth]{./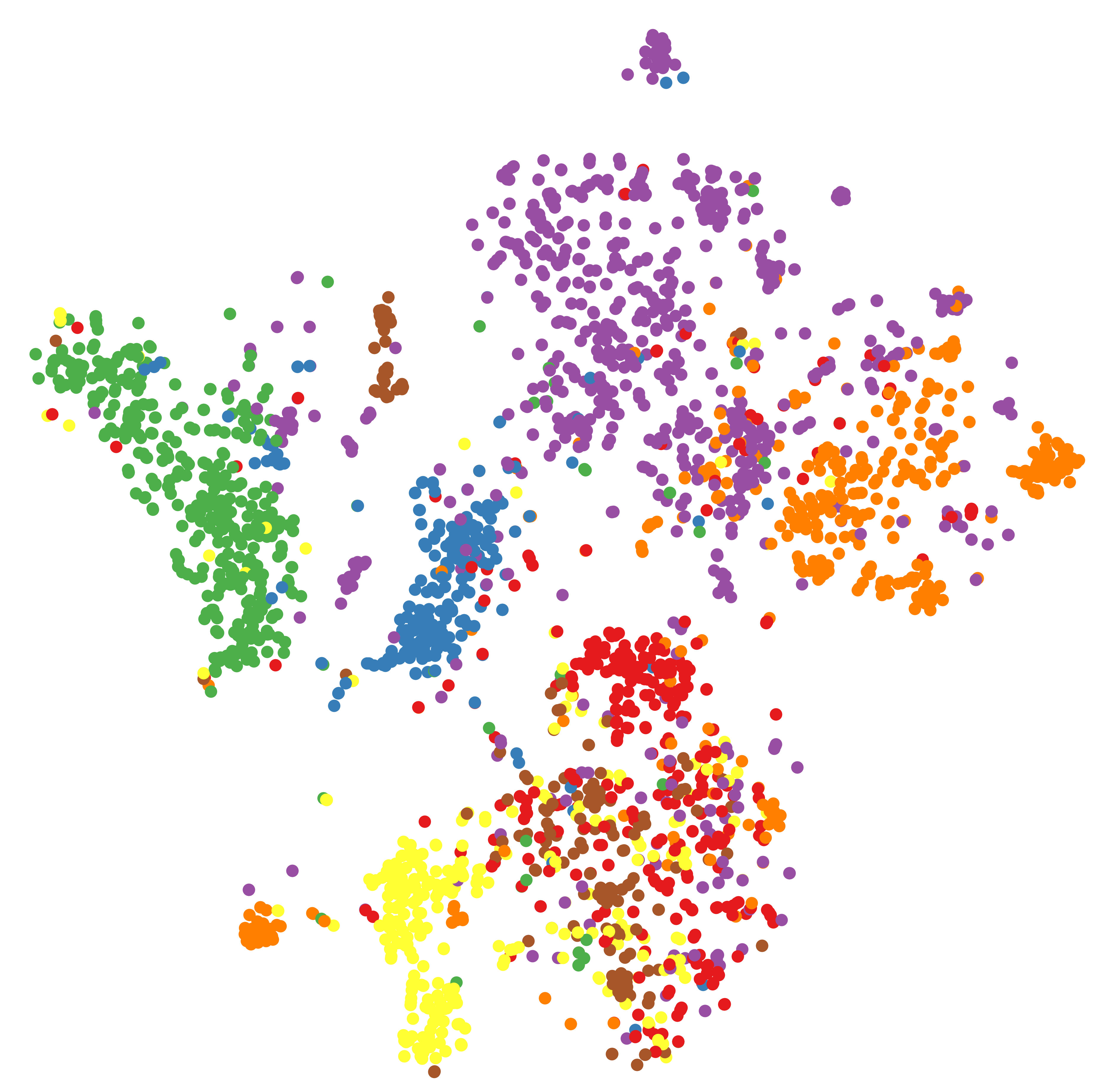}}
			\end{minipage}
			& \begin{minipage}[b]{0.2\textwidth}
				\centering				\raisebox{-.5\height}{\includegraphics[width=0.8\linewidth]{./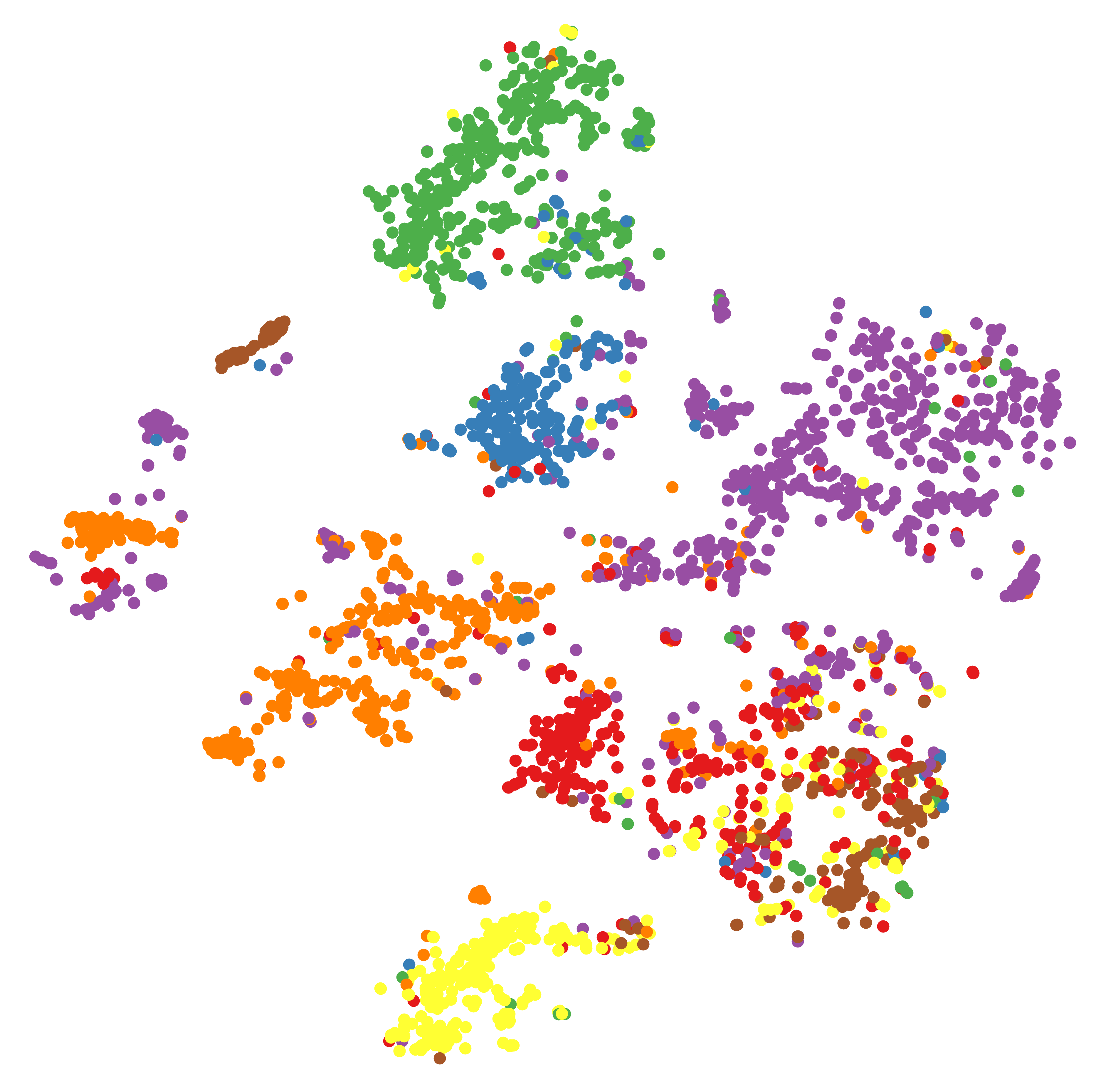}}
			\end{minipage}\cr
	\end{tabular}}
\end{table*}

\subsection{Robustness Against Adversarial Attack}

To answer \textbf{RQ1}, we conduct extensive experiments to investigate the performance of our method. 

\textbf{Performance on various downstream tasks.} In \cref{tab:acc}, we summarize the performance of baselines and our method on various tasks. The bold highlighted numbers indicate the best among all methods and the underlined numbers indicate the second best method. For the PGD attack on features, the budget is fixed as $0.001$, and for the graph PGD attack on structure, we set the budget as $0.2*|\mathcal{E}|$ where $|\mathcal{E}|$ denotes the number of edges in the graph. From \cref{tab:acc}, we can find the following observations: (i) our proposed RGIB outperforms other baseline methods on the adversarial graphs and achieve competitive performance on the benign graphs for various tasks. Specifically, for the Cora dataset without perturbations, the state-of-the-art method MVGRL achieves $79.8\%$ classification accuracy, while our method is comparable with $78.5\%$. When the dataset is adversarially perturbed, the classification performance of MVGRL drops significantly to $57.1\%$, while our method remains robust with a $74.9\%$ accuracy. (ii) Compared to the non-robust methods (\ie, GAE, DGI, MVGRL and GRACE) and the preprocessing-based methods (\ie, DGI-Jaccard and DGI-SVD), robust node representations learning methods, \ie, GRV and RGIB, perform better on various tasks and datasets. This observation demonstrates the superiority of robust learning-based  methods. (iii) Our proposed RGIB outperforms GRV on the adversarial graphs. We attribute this observation to the fact that RGIB does not rely on any biased assumptions. This observation demonstrates that the tighter lower bound of adversarial risk (\ie, \textit{AdvRisk}) provided by RGIB contributes to the robustness of node representations.

\textbf{Performance under different adversarial budgets.} To further investigate the performance of unsupervised graph representation learning methods against adversarial attacks with varying budgets, we present the node classification accuracy with different levels of structural perturbations and feature perturbations in \cref{fig:acc_diff_perturbations}. From the figure, we can observe consistent results with our previous findings, where our RGIB demonstrates better robustness compared to other baseline methods as the perturbation budget varies. This observation highlights the effectiveness of our approach in maintaining robust node representations even when faced with different degrees of adversarial perturbations.

\textbf{Performance against other adversarial attack methods.}  
In \cref{tab:acc_diff_attack}, we investigate the performance of our method against different adversarial attack methods. Specifically, we adopt three widely-used attack methods: 1) MinMax~\cite{xu2019topology} generates adversarial perturbations by iteratively optimizing a min-max optimization problem; 2) Metattack~\cite{zugner_adversarial_2019} employs meta-gradients to solve the bilevel adversarial attack problem underlying training-time attacks, essentially treating the graph as a hyperparameter to optimize; 3) CLGA~\cite{zhang2022unsupervised} is a novel unsupervised gradient-based adversarial attack method that does not rely on labels for graph contrastive learning. Since these attack methods cannot be directly applied to our model, we conduct these attack strategies in a black-box setting. Specifically, a surrogate GCN model is trained in a supervised fashion, and then perturbations are generated based on the surrogate model using the three attack methods mentioned above. We report the classification accuracy on the adversarial graphs generated by the three methods in \cref{tab:acc_diff_attack}. Results show that our RGIB outperforms other baseline methods across various attack methods.

\textbf{Qualitative visualization.} For an intuitive illustration, we visualize the adversarial representations learned from Cora using t-SNE \cite{van2008visualizing} in \cref{tab:rep}. From a qualitative perspective, we can find that the intercluster boundary of DGI gets blurred after adversarial attacks, whereas GRV and RGIB stay more robust. Specifically, for our proposed RGIB method, the intercluster boundaries are clearer to other methods. For example, the nodes labeled as orange and purple have wider decision boundaries for RGIB than DGI and GRV. This observation indicates that our method improves the robustness of representations, resulting in more stable intercluster boundaries even when the graphs are subjected to attacks.

\begin{table}[t]
	\centering
	\normalsize
	\setlength{\tabcolsep}{5pt}
	\caption{Settings of ablation studies.}\label{tab:ab_settings}
	\resizebox{0.4\textwidth}{!}{
		\begin{tabular}{ccc}
		\toprule
            &\makecell[c]{MI estimator}&\makecell[c]{Training strategy }\cr
            \hline
            RGIB-S&SMI&structure and feature\cr
            RGIB-DGI&DGI&feature-only\cr
            RGIB&SMI&feature-only \cr
		\bottomrule
		\end{tabular}
	}
\end{table}

\begin{figure}[ht]
    \centering
    \includegraphics[width=0.45\textwidth]{./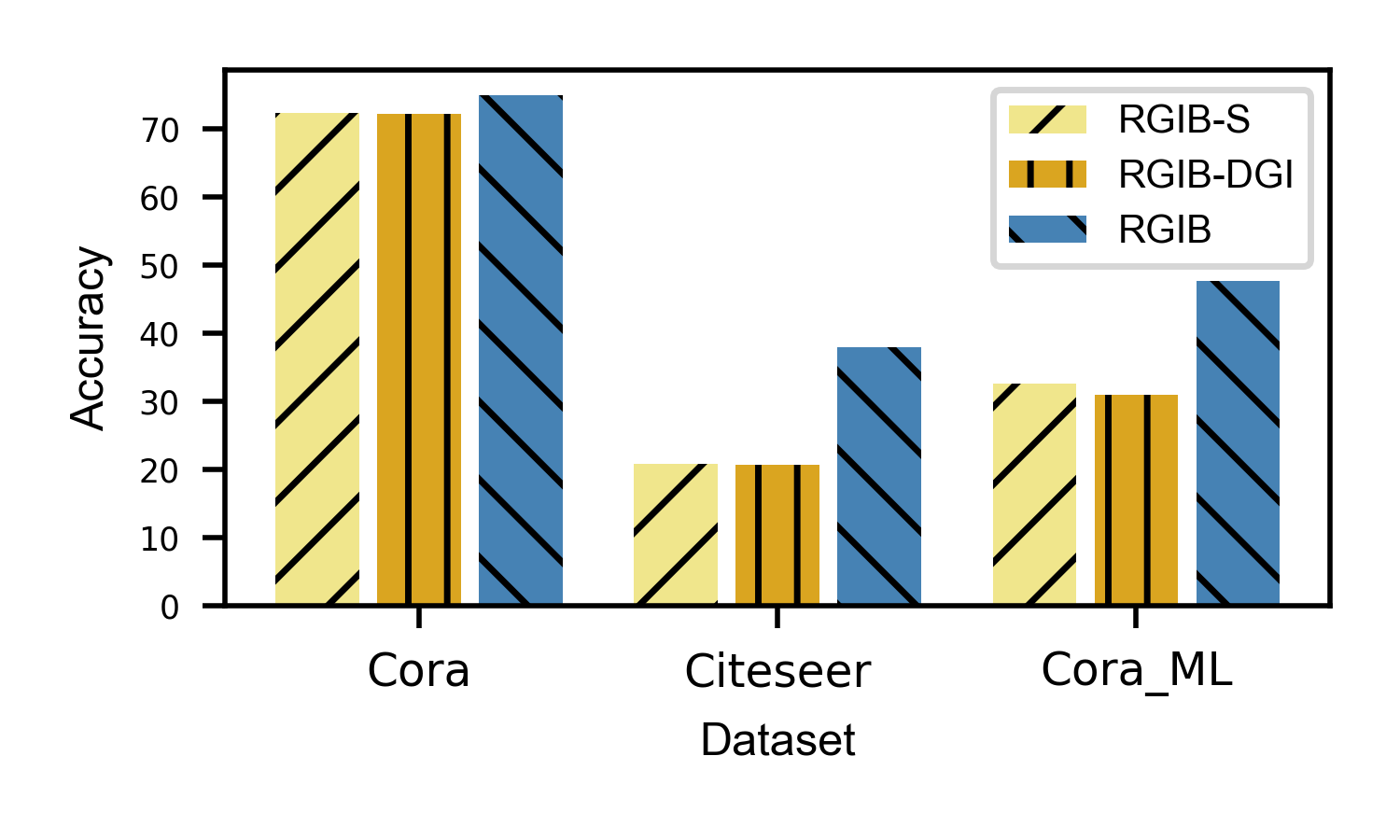}
    \caption{Ablation sturdy on the mutual information estimator and the training strategy.}
    \label{fig:ablation}
\end{figure}
\vspace{-0.2cm}
\subsection{Ablation studies}

To optimize our RGIB objective, we employ an effective mutual information estimator with a fine-grained summary and an efficient training strategy that only considers feature perturbations. To answer \textbf{RQ2}, we conduct ablation studies to show how these points contribute to the performance of our method in \cref{fig:ablation}. In \cref{fig:ablation}, we denote RGIB-S as a variant of RGIB which consider both the structure perturbations and feature perturbations during training. And RGIB-DGI is a variant of RGIB which estimates the mutual information using DGI~\cite{velickovic2019deep} instead of our SMI. We summarize the settings of different variants in \cref{tab:ab_settings}. From the figure, we make the following observations: (i) Our RGIB outperforms RGIB-DGI.  We attribute the improvement in performance to the fine-grained summary provided by SMI. Specifically, DGI extracts the graph-level information by a readout function (mean pooling on all node representations), which is coarse-grained and leads to information loss. Our proposed SMI leverage subgraph-level summary (extracted by SGC) to provide fine-grained information when fed into the discriminator along with node representations. The fine-grained information can help to identify perturbations on node representations more precisely and make the encoder more robust against perturbations. (ii) Our RGIB outperforms RGIB-S. A reasonable explanation for this observation is that the structure perturbations involved in the RGIB-S increase the difficulty of convergence since the structure perturbations are discrete. In contrast, RGIB only considers continuous feature perturbations, making it easier to converge.

Moreover, to investigate how the feature-only perturbation strategy affects the efficiency of our method, we report the running time in \cref{tab:time_training_procedure}. We find that RGIB, which only perturbs features in the training procedure, is much more efficient than methods like RGIB-S and GRV, which jointly perturb the structure and features. For example, an epoch will cost 0.49s on Cora if only perturbing features and 2.69s if structure and features are perturbed simultaneously, \ie, it will lead to $5.5\times$ time cost increase if the structure is perturbed in the training procedure. Furthermore, the joint perturbation training strategy cannot scale to larger graphs like Pubmed because of the high memory costs.

\begin{table}[t]
	\centering
	\normalsize
	\setlength{\tabcolsep}{2pt}
	\caption{Time cost of different training strategies. N/A denotes the method can not scale to the graph due to limitation of GPU RAM.}\label{tab:time_training_procedure}
	\resizebox{0.35\textwidth}{!}{
		\begin{tabular}{lcccccc}
			\toprule
			\multirow{2}{*}{Method}&
			\multicolumn{4}{c}{Time}\cr
			\cmidrule(lr){2-5}
			&Cora&Citeseer&Cora\_ML&Pubmed\cr
			\midrule
			GRV &3.19s&4.41s&3.73s&N/A\cr
			RGIB-S &2.69s&4.17s&3.52s&N/A\cr
			RGIB &0.49s&0.88s&0.64s&3.30s\cr
			\bottomrule
		\end{tabular}
	}
\end{table}

\begin{figure}[t]
	\setlength{\abovecaptionskip}{-0.3cm}
    \centering
    \includegraphics[width=0.4\textwidth]{./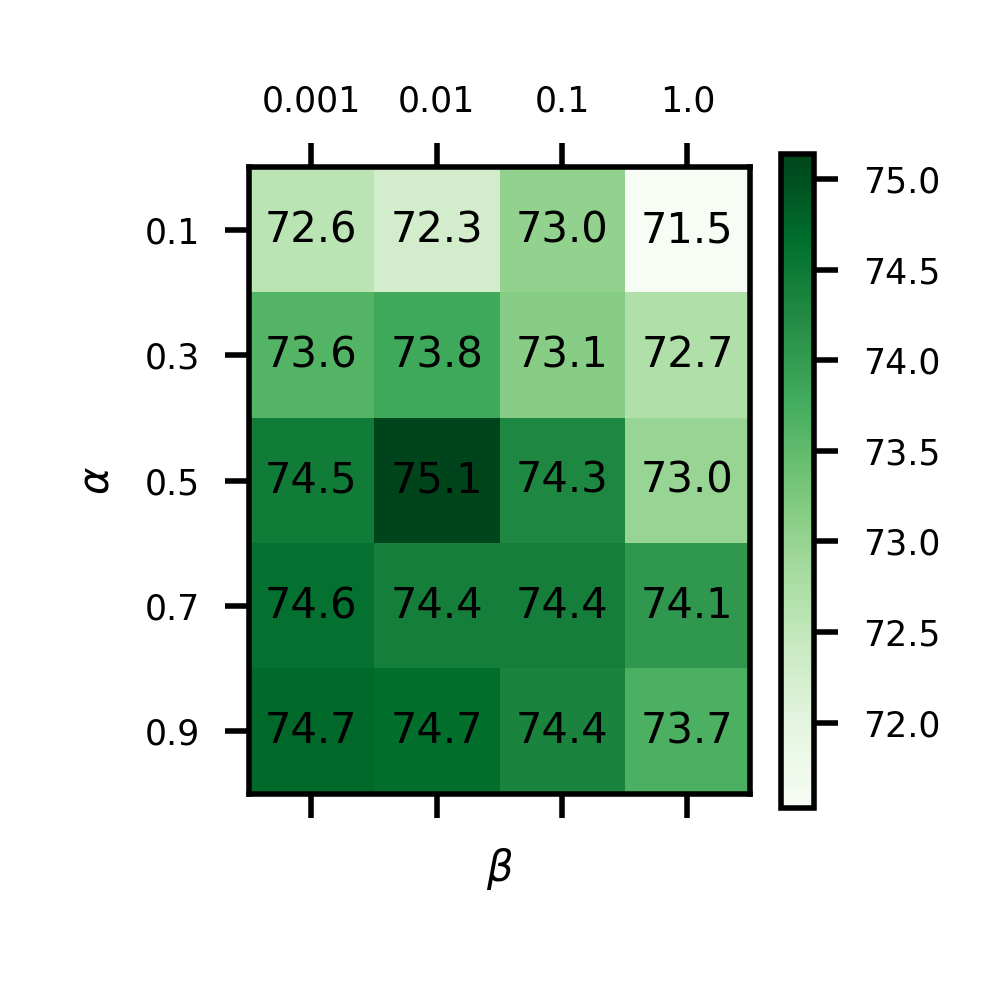}
    \caption{Parameter sturdy on the hyper-parameter.}
    \label{fig:parameter}
\end{figure}

\subsection{Parameter Study}
To answer \textbf{RQ3}, we conduct parameter studies to explore the effect of different values of hyper-parameters, \ie, $\alpha$ and $\beta$ in the \cref{eq:opt_complete}. Specifically, we report the classification accuracy in \cref{fig:parameter} with different values of the hyper-parameters.  In \cref{fig:parameter}, we can observe that either too small value or too large value of the hyper-parameters will lead to sub-optimal performance. It is essential to strike a proper balance between the effectiveness and robustness, as well as between the informative term and the adversarial term.

\section{CONCLUSION}
In this paper, we focus on robust unsupervised graph representation learning. To avoid the biased assumption in Informax objective, we extend the Information Bottleneck (IB) principle to learn robust node representations, resulting in a novel method Robust Graph Information Bottleneck (RGIB). Our RGIB simultaneously preserves the original information in the benign graph and eliminates the adversarial information in the adversarial graph. In comparison to existing methods, our proposed RGIB establishes a tighter connection with downstream classifiers and does not rely on any biased assumptions. Furthermore, to optimize the graph representation robustness effectively and efficiently, we propose a mutual information estimator with fine-grained graph summary and a training strategy that only involves feature perturbation. Experimental results on various benchmark datasets and downstream tasks demonstrate the effectiveness and efficiency of our proposed method in improving the robustness of UGRL against adversarial attacks.

\section*{Acknowledgment}
Jihong Wang, Minnan Luo, and Qinghua Zheng are supported by the National Key Research and Development Program of China (No. 2022YFB3102600), National Nature Science Foundation of China (No. 62192781, No. 62272374, No. 62202367, No. 62250009, No. 62137002, No. 61937001), Innovative Research Group of the National Natural Science Foundation of China (61721002), Innovation Research Team of Ministry of Education (IRT\_17R86), Project of China Knowledge Center for Engineering Science and Technology, and Project of Chinese academy of engineering ``The Online and Offline Mixed Educational Service System for `The Belt and Road' Training in MOOC China''. They would like to express our gratitude for the support of K. C. Wong Education Foundation. The authors also would like to thank the reviewers and chairs
for their constructive feedback and suggestions.

	\ifCLASSOPTIONcompsoc

	\ifCLASSOPTIONcaptionsoff
	\newpage
	\fi

	\bibliographystyle{IEEEtran}
	\bibliography{reference}
	%
	%
	%
	
	%

	
	\begin{IEEEbiography}[{\includegraphics[width=1in,height=1.25in,clip,keepaspectratio]{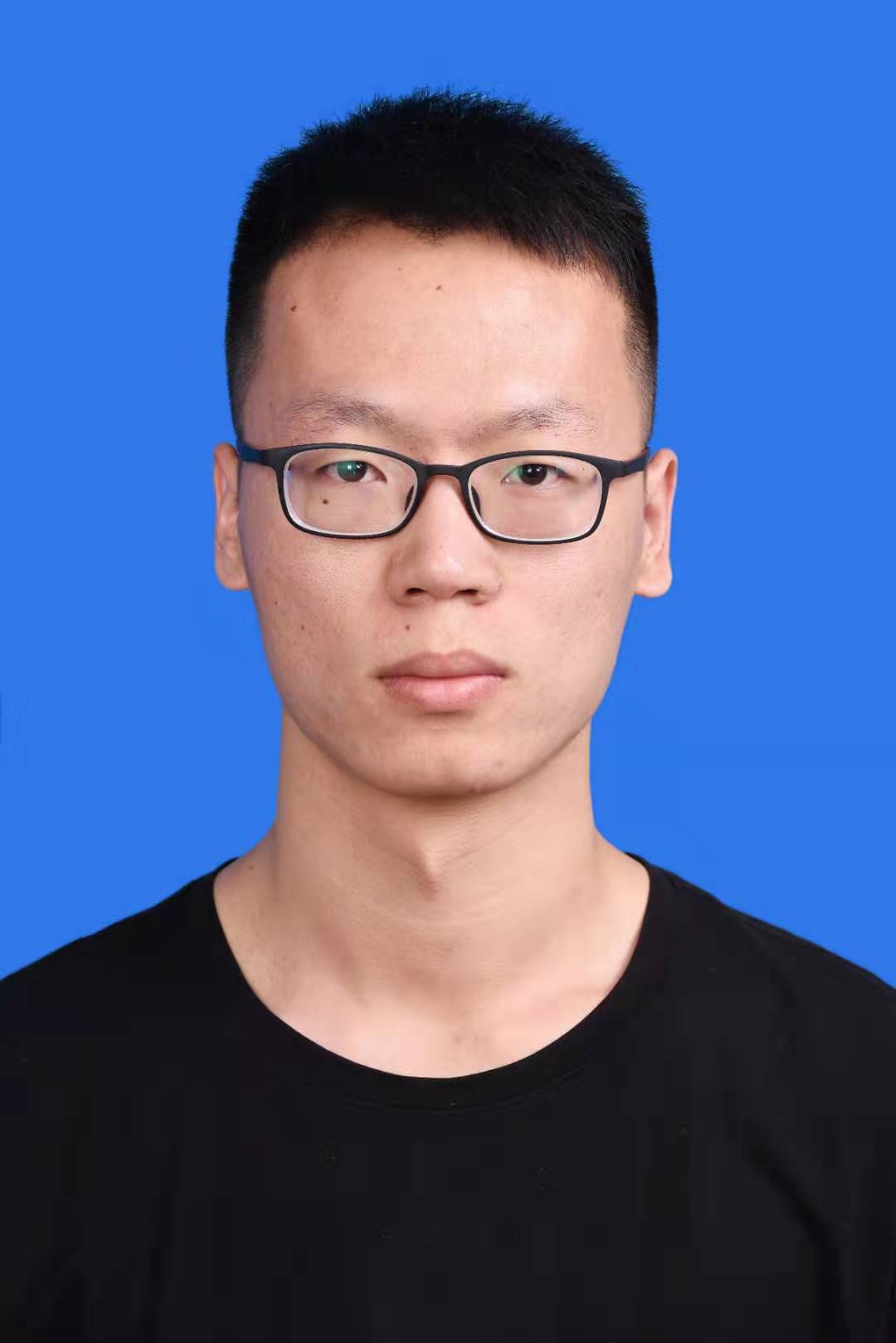}}]{Jihong Wang}
		received the B.Eng. degree from the School of Computer Science and Technology, Xi'an Jiaotong University, China, in 2019. He is currently a Ph.D. student in the School of Computer Science and Technology, Xi'an Jiaotong University, China. His research interests include robust machine learning and its applications, such as social computing and learning algorithms on graphs.
	\end{IEEEbiography}
	\vspace{-1cm}
	\begin{IEEEbiography}[{\includegraphics[width=1in,height=1.25in,clip,keepaspectratio]{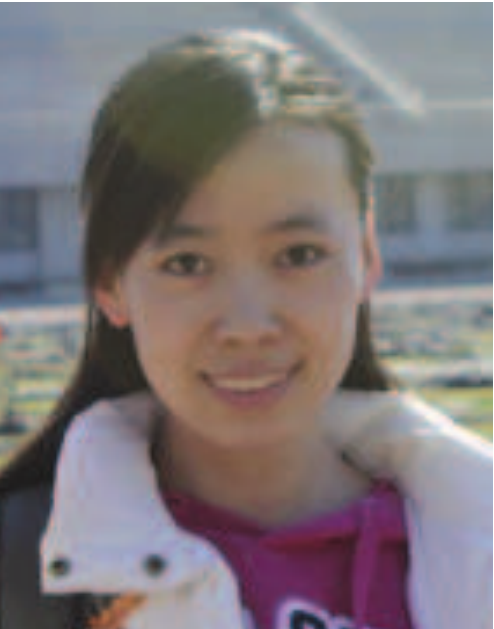}}]{Minnan Luo}
		received the Ph.D. degree from the Department of Computer Science and Technology, Tsinghua University, China, in 2014. She is currently an Associate Professor in the School of Electronic and Information Engineering at Xi'an Jiaotong University. Her research interests include machine learning and optimization, data mining, image processing, and cross-media retrieval.
	\end{IEEEbiography}
	\vspace{-1cm}
	\begin{IEEEbiography}[{\includegraphics[width=1in,height=1.25in,clip,keepaspectratio]{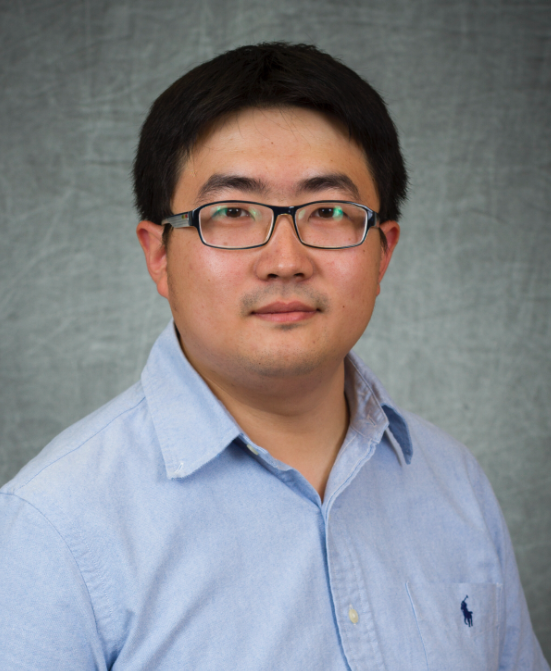}}]{Jundong Li}
		received the Ph.D. degree in Computer Science at Arizona State University in 2019. He is an Assistant Professor at the University of Virginia with appointments in Department of Electrical and Computer Engineering, Department of Computer Science. His research interests include data mining, machine learning and graph learning.
	\end{IEEEbiography}
 	\vspace{-1cm}

	\begin{IEEEbiography}[{\includegraphics[width=1in,height=1.25in,clip,keepaspectratio]{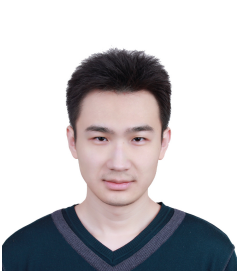}}]{Ziqi Liu}
		received the Ph.D. degree from the Department of Computer Science, Xi'an Jiaotong Univerity, China, in 2017. He was a visiting researcher advised by Prof. Alex Smola at Machine Learning Department, Carnegie Mellon University. He currently works at Ant Group. His research interests include  probabilistic models, graphical models, nonparametric modeling, large-scale inference algorithms and applications in user modeling, text mining.
		\end{IEEEbiography}
  	\vspace{-1cm}

    \begin{IEEEbiography}[{\includegraphics[width=1in,height=1.25in,clip,keepaspectratio]{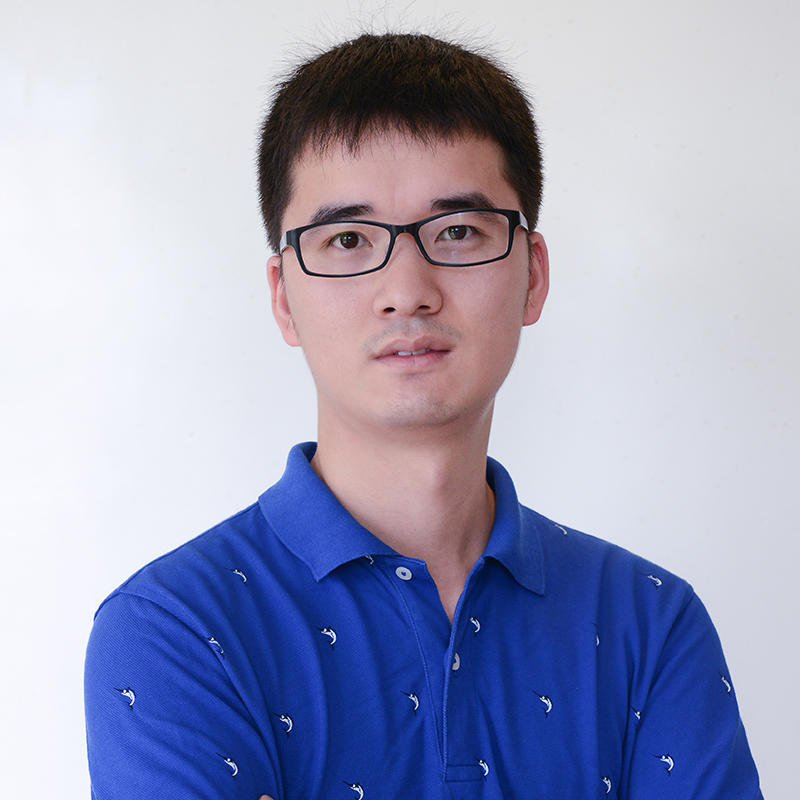}}]{Jun Zhou} is currently a Senior Staff Engineer at Ant Financial. His research mainly focuses on machine learning and data mining. He is a member of IEEE. He has published more than 40 papers in top-tier machine learning and data mining conferences, including VLDB, WWW, SIGIR, NeurIPS, AAAI, IJCAI, and KDD.
		\end{IEEEbiography}
		\vspace{-1cm}

	\begin{IEEEbiography}[{\includegraphics[width=1in,height=1.25in,clip,keepaspectratio]{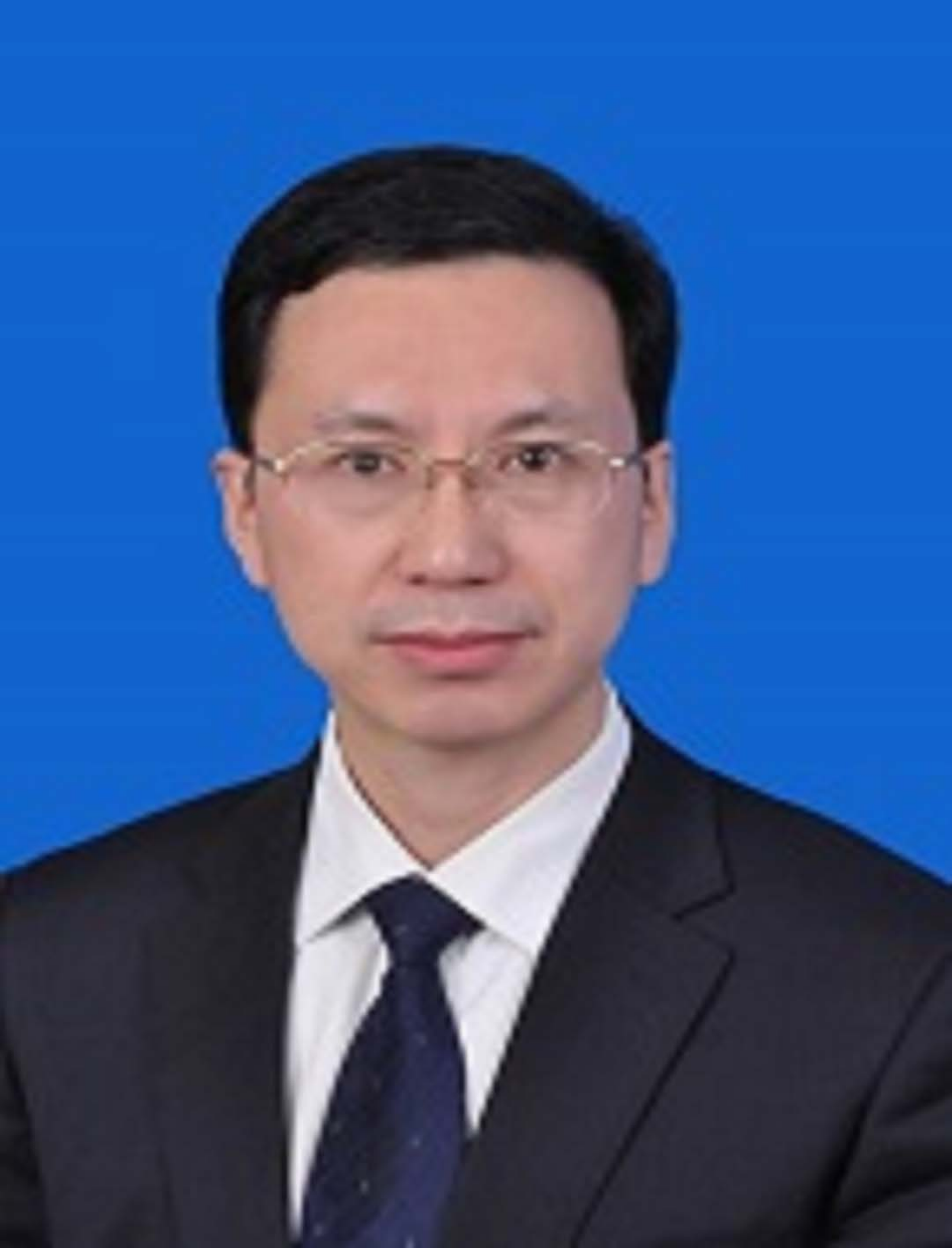}}]{Qinghua Zheng}
		received the Ph.D. degree of system engineering in 1997, the M.Sc. degree of computer organization and architecture in 1993, and the B.Eng. degree of computer software in 1990 at Xi’an Jiaotong University, China. He is currently a Professor in the School of Electronic and Information Engineering at Xi'an Jiaotong University. His research interests include computer network security, intelligent E-learning theory and algorithm, and multimedia e-learning.

	\end{IEEEbiography}
	
	
	

\end{document}